\newtheorem{lemma}{Lemma}
\newtheorem{proof}{Proof}[section]
\renewcommand{\p@subfigure}{\thefigure.\space}
\begin{document}
\title{NPSA: Nonorthogonal Principal Skewness Analysis }
\author{Xiurui Geng, Lei Wang
\thanks{X.Geng and L.Wang are with Institute of Electronics,
Chinese Academy of Science, Beijing 100864, China, with the Key Laboratory
Systems, Chinese Academy of Science, Beijing 100864, China, and also
with the University of the Chinese Academy of Sciences, Beijing 100049,
China.(Corresponding authors: Lei Wang, Xiurui Geng. Email address:wanglei179@mails.ucas.ac.cn;gengxr@sina.com.cn.)} }
%author{***  \thanks{*****}}
%\thanks是放在\author的大括号里面~~  否则会在第一页出现空白页~~

%1. Key Laboratory of Technology in Geo-spatial Information Processing and Application System,\\
%Chinese Academy of Sciences \\
%2. Institute of Electronics, Chinese Academy of Sciences   \\
% 3. University of Chinese Academy of Sciences}
%写这个thanks，要包含在文档中

%\date{}
%Key Laboratory of Technology in Geo-spatial Information Processing and Application System,
%Chinese Academy of Sciences, Beijing 100190, China\\
%%% Institute of Electronics, Chinese Academy of Sciences\\
 %University of Chinese Academy of Sciences, Beijing 100000, China}

% Do not put math or special symbols in the title.
%\title{A New Joint Diagonalization Algorithm \\Based On Gauss-Newton Optimization}

\maketitle

\begin{abstract}
Principal skewness analysis (PSA) has been introduced for feature extraction  in hyperspectral imagery. As a third-order generalization of principal component analysis (PCA), its solution of searching for the locally maximum skewness direction  is transformed into the problem of calculating the eigenpairs (the eigenvalues  and the corresponding eigenvectors)  of a coskewness tensor. By combining a fixed-point method with an orthogonal constraint, it can prevent the new eigenpairs from  converging to the same maxima that has been determined before. However,  the eigenvectors of the supersymmetric tensor are  not inherently orthogonal  in general, which implies that  the results obtained by  the  search strategy  used in PSA 
 may unavoidably deviate from  the  actual eigenpairs.   In this paper, we propose a new nonorthogonal search strategy to solve this problem  and  the new algorithm is named  nonorthogonal principal skewness analysis (NPSA). The contribution of NPSA lies in  the finding that the search space   of the eigenvector to be determined can be   enlarged by using the orthogonal complement of   the  Kronecker product of the previous one, instead of  its orthogonal complement space. We give a detailed theoretical proof to illustrate why the new strategy can  result in the more accurate eigenpairs. In addition, after some algebraic derivations, the  complexity of the presented algorithm is  also greatly reduced.   Experiments with both simulated data and real multi/hyperspectral imagery  demonstrate its validity in feature extraction.
 
 %To evaluate the performance and accuracy of the new method, we apply it to blind image separation (BIS). Several indices evaluated in the experiment show that NPSA outperforms the other  algorithms.

% We  improve and optimize the strategy to reduce the computational complexity and to save the memory efficiently
\end{abstract}
%as well as generally nonorthogonal to the directions determined before.
\begin{IEEEkeywords}
Coskewness tensor, Eigenpairs, Feature extraction, Kronecker product, Nonorthogonality, Principal skewness analysis, Subspace.
\end{IEEEkeywords}

%\IEEEpeerreviewmaketitle

\section{Introduction}
%\IEEEPARstart{S}{INCE} hyperspectral  imagery (HSI) occupies tens or hundreds of narrow contiguous bands with a very high spectral resolution, it has drawn more attention from various applications in the past  decades, such as spectral unmixing\cite{NTFunmixing,MVSA}, classification\cite{CLASS,BSVD} and target detection\cite{HCEM,KBTC} and so on. However, high spectral dimensionality with strong intraband correlations also brings about informantion redundancy and heavy computational burden\cite{GRPCA}. Therefore, dimensionality reduction (DR) is an essential technique for addressing these problems. As  one  of the most effective methods, feature extraction plays a crucial role in hyperspectral  imagery processing.
% TIP
\IEEEPARstart{S}{INCE} hyperspectral  imagery   consists of tens or hundreds of  bands with a very high spectral resolution, it has drawn more attention from various applications in the past  decades, such as spectral unmixing\cite{VICM,NTFunmixing}, classification\cite{CLASS,CKF_SVM}, target detection\cite{RAM,KBTC} and so on. However, high spectral dimensionality with strong intraband correlations also results in informantion redundancy and  computational burden of data processing\cite{GRPCA}.Therefore, dimensionality reduction (DR) has become one of the most important techniques for addressing these problems. DR can be  categorized  into two classes: feature extraction and feature selection.  In this paper, we mainly focus on the former.
%As one of  the most effective methods, feature extraction plays a crucial role in hyperspectral imagery processing.

 %Therefore, dimensionality reduction (DR) has become one of the most important
%approaches in hyperspectral processing.
%As  one  of the most effective methods, feature extraction plays a crucial role in hyperspectral  imagery processing.

The most commonly used feature extraction algorithm is principal component
analysis (PCA)\cite{PCA}, which aims to search for  the  projection direction  that maximizes the variance. Its solution  corresponds to the eigenvectors  of the image's covariance matrix. Several techniques that originated  from PCA have been developed, such as kernel PCA (KPCA)\cite{KPCA} and maximum noise fraction (MNF)\cite{MNF}. KPCA is a nonlinear extension of PCA, which transforms the data into a higher dimensional space via a mapping function and then performs the PCA method. MNF is another popular method for feature extraction, which considers the image quality and selects the signal-to-noise ratio (SNR) as the measure index. %It has been demonstrated that MNF is a two-cascade method of PCA and equates to the noise-adjusted PCA (NAPC).

%The methods mentioned above mainly focus on the second-order statistical characteristics of the data. However, the distribution of many real  data sets does usually not satisfy the Gaussian distribution. Therefore, these methods may have a poor performance and cannot reveal the intrinsical structure of the data. In this sense, many methods based on the higher-order statistics have been paid more attention in recent years and have been applied for many remote sensing fields, including anomaly detection\cite{COSD,HOSKPCA}, band selection\cite{JSBS}, endmember extraction\cite{HOSEE}, target detection\cite{HOSTD,HOSVD}. Independent component analysis (ICA) is one of the most successful feature extraction techniques. It derived from the blind source separation (BSS) application, which attempts to find a linear representation of non-Gaussian data so that the components are statistically independent, or as independent as possible\cite{ICA}. Several algorithms   used widely include joint approximate diagonalisation of eigen-matrices (JADE)\cite{JADE} and Fast   Independent Component Analysis (FastICA)\cite{fastICA}. JADE utilizes the fourth-order culumant tensor of the data. In \cite{EcoICA} , a third-order analogue called EcoICA is also proposed. FastICA can select skewness  or negentropy or other indices as the  non-Gaussian measurement.
%tip
The methods mentioned above mainly focus on the second-order statistical characteristics of the data. However, the distribution of many real  data sets  usually does not satisfy the Gaussian distribution. Therefore, these methods may have a poor performance and cannot reveal the intrinsical structure of the data. In this case, many methods based on the higher-order statistics have been paid more attention in recent years and have been applied in many remote sensing fields, including anomaly detection\cite{COSD,HOSKPCA}, endmember extraction\cite{HOSEE}, target detection\cite{HOSTD,HOSVD}. Independent component analysis (ICA) is one of the most successful feature extraction techniques. It was  derived from the blind source separation (BSS) application, which attempts to find a linear representation of non-Gaussian data so that the components are statistically
independent, or as independent as possible\cite{ICA}. Several  widely-used  algorithms   include joint approximate diagonalization of eigen-matrices (JADE)\cite{JADE} and Fast   Independent Component Analysis (FastICA)\cite{fastICA}. JADE utilizes the fourth-order culumant tensor of the data. Some algorithms that originated from JADE  have been developed later.   A third-order analogue called Skewness-based ICA via Eigenvectors of Cumulant Operator (EcoICA) is  proposed\cite{EcoICA}. Other joint diagonalization methods include subspace fitting method (SSF)\cite{SSF}, Distinctively
Normalized Joint Diagonalization (DNJD)\cite{DNJD} and Alternating Columns-Diagonal Centers (ACDC)\cite{NJD}, etc.

FastICA can select skewness,  negentropy or other indices as the  non-Gaussian measurement. It can reach a cubic convergence rate and outperform most of the other commonly used ICA algorithms\cite{Oja}. However,  it requires all the pixels to be involved in each iteration for searching for  the optimal projection direction, which is quite time-consuming, especially for the high dimensional data. To solve this problem, Geng et al. has proposed an efficient method called Principal Skewness Analysis (PSA)\cite{PSA}. PSA can be viewed as a third-order generalization of PCA. Meanwhile, it is also equivalent to FastICA when selecting the skewness as a non-Gaussian index. Following this work, a momentum version (called MPSA) to alliviate  the oscillation phenomenon of PSA \cite{MPSA} and a natural fourth-order extension method, i.e., Principal Kurtosis Analysis (PKA)\cite{PKA} are also analyzed.
   %Note that the coskewness tensor is  also exploited and applied to small target detection \cite{COSD} and band selection \cite{JSBS}.

  %The solution  of these PSA-derived methods can  be transformed into the problem of calculating the eigenvalues and the corresponding eigenvectors of the tensor, which is similar to PCA. By adopting the fixed-point scheme and introducing an orthogonal complement operator, all of them  obtain an orthogonal  transformation matrix eventually. However, theoretical analysis based on multi-linear algebra has shown that the eigenvectors of a supersymmetric tensor are not inherently orthogonal in general \cite{Hsu,OTD}, which is different from the situation for the real symmetric matrix. In  this way, such a strategy introduced in  these methods may  be unreasonable, since the search space of the eigenvectors to be determined is always restricted  in the orthogonal complement space of the previous one and and its solution can unavoidably deviate from one of the actual eigenpairs
 The solution  of these PSA-derived methods can  be transformed into the problem of calculating the eigenvalues and the corresponding eigenvectors of the tensor, which is similar to PCA. By adopting the fixed-point scheme, the solution can be obtained iteratively.   To  prevent the solution from converging to the same one that has been determined previously, an orthogonal complement operator is also introduced in these methods. Thus  all of them   can obtain an orthogonal  transformation matrix eventually,  since the search space of the eigenvector to be determined is always restricted  in the orthogonal complement space of the previous one. However, theoretical analysis based on multi-linear algebra has shown that the eigenvectors of a supersymmetric tensor are not inherently orthogonal in general \cite{NCM,Hsu}, which is different from the situation for the real symmetric matrix.  Thus the orthogonal constraint in PSA and the inherent nonorthogonality of the eigenvectors of  supersymmetric tensor are a pair of  irreconcilable  contradictions.  In this paper, we propose  a more relaxed  constraint, and based on this,   a new  algorithm, which is  named nonorthogonal principal skewness analysis (NPSA), is presented  to deal with this problem.   It is expected  that  NPSA can have the following two attributes: 1) similar to PSA,  it can also prevent the eigenvector to be determined from converging to the  eigenvectors that have been determined previously; 2)   it can obtain a more accurate solution than that of PSA meanwhile.

The rest of this paper is organized as follows. In
Section II, we  briefly review the original  PSA algorithm and analyze its deficiencies with a simple example. In Section III, we present  the new strategy in NPSA first and then obtain an  improved version  when taking  the complexity into
consideration. Two strategies are compared in the end of this section. In Section IV, we give  some theoretical analysis to justify  the validity of  the algorithm.  Some experimental results are given in Section V, and the conclusions are drawn in Section VI.
%and the pseudo-code of NPSA is summarized.
\section{Background}
   In this section, we first introduce some notations, basic definitions  and important properties used throughout this paper, and then  give a brief review of  the formations and deficiencies  of PSA.

 \begin{figure}[t]
    \centering
  \includegraphics[width=0.45\textwidth]{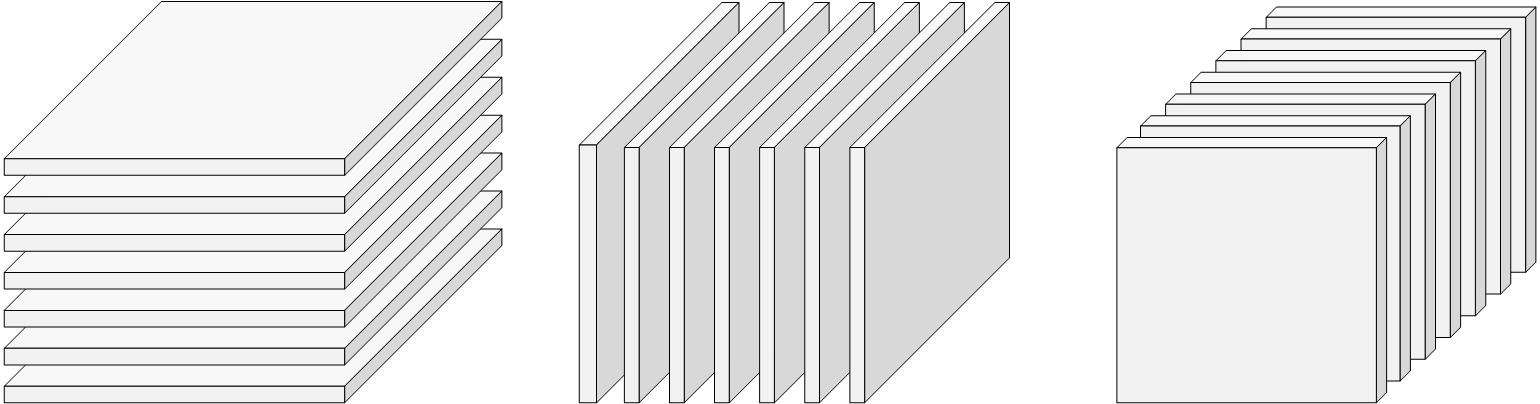}\\
  \caption{\quad Slices of a third-order tensor, i.e., horizontal slices $ \mathbf A_{i::}$, lateral slices $\mathbf A_{:j:} $ and  frontal slices $\mathbf A_{::k}$.}
  \label{sliceshow}
\end{figure}

\subsection{Preliminaries}
%scalars are denoted by lowercase letters, e.g., $ a$, vectors are denoted by boldface lowercase letters, e.g., $\mathbf a $, matrices are denoted by boldface capital letters, e.g., $\mathbf A $,
   Following \cite{kolda}, in this paper,  the high-order tensors are denoted by boldface Euler script letters, e.g., $\mathcal  A $. A $N$th-order tensor is defined as $\mathcal A \in R^{I_1 \times I_2  \times \dots \times I_N } $, where $N$ is the order of $\mathcal A $, also called the  way or   mode. For $N$=$1$, it is a vector.  For $N$=$2$, it is a matrix. The element of $\mathcal A$ is denoted by $a_{i_1,i_2,\dots,i_N},i_n \in \{{1,2,\dots,I_n}\},1 \le n \le N$.
   Fibers, the higher-order analogue of matrix rows and columns, are defined by fixing every index except for  one. Slices are two-dimensional sections of a tensor, defined by fixing all except for two indices. For a third-order tensor $\mathcal A \in R^{I_1 \times I_2 \times I_3}$, as shown in Fig.~\ref{sliceshow}, its three different slices are called horizontal, lateral and frontal slices, which can be denoted by $ \mathbf A_{i::},\mathbf A_{:j:},\mathbf A_{::k}$, respectively. Compactly, the $ k $th frontal slice  is also denoted as $ \mathbf A_k$. A tensor is called supersymmetric if its elements remain invariant under any permutation of the indices\cite{kolda}.% It can be easily checked that this reduces to the usual defnition of a symmetric matrix for $N = 2$ .

   Some important operations are illustrated as follows: $\circ $ denotes the outer product of two vectors. The operator $\rm vec $  is to reorder the elements of a matrix or a higher-order tensor into a vector and $\rm unvec $  is the opposite. The $n$-mode product of a tensor $\mathcal A \in R^{I_1 \times I_2 \dots \times I_N } $ with a matrix $\mathbf U \in R^{J \times I_n}$ is   denoted by $\mathcal A \times_{n} \mathbf U \in R^{I_1 \times \dots \times I_{n-1} \times J \times I_{n+1}\times \dots \times I_N } $, whose element is $(\mathcal A \times_{n} \mathbf U )_{i_1\dots i_{n-1}ji_{n+1}\dots i_N}=\sum_{i_{n}=1} ^{I_{n}} a_{i_1,i_2,\dots,i_N} u_{ji_{n}} $. The range of a matrix $ \mathbf A
    \in R^{m \times n}$ is defined as   $ \rm R( \mathbf A) =\{  \mathbf y \in \it{R}^{m } | \mathbf A\mathbf x=\mathbf y, \mathbf x \in \it{R}^{n}  \}$ and its dimensionality is denoted by $ \rm dim[\rm R( \mathbf A)] $. $N^{+}$ denotes the set of all positive integers.

    The Kronecker product of two matrices $ \mathbf A \in R^{I \times J} $ and $ \mathbf B \in R^{K \times L} $ is a matrix denoted as $
  \mathbf A  \otimes  \mathbf B  \in R^{IK \times JL} $, which is defined as
  \begin{equation}%\label{ekronecker}
    \begin{split}
  \mathbf A  \otimes  \mathbf B
  &=\begin{bmatrix}
   a_{11} \mathbf B  & \dots &  a_{1J}  \mathbf B  \\
   & \ddots  & \vdots  \\
  a_{I1}  \mathbf B  & \dots  & a_{IJ}  \mathbf B
   \end{bmatrix}
   %_{L^{3} \times L^{3}}
   \end{split}.
\end{equation}

 For simplicity,  we use $ \mathbf A ^{\otimes ^{p}}$ and $ \mathbf a ^{\otimes ^{p}}$ to denote the $p$-times  Kronecker Product of the matrix $ \mathbf A  $ and the vector $ \mathbf a $,  respectively.

The properties that will be used later are presented here\cite{KP,zhang2017matrix},
\begin{equation}\label{kp1}
%\notag
(\mathbf A \otimes \mathbf B)^{\mathrm {T}}=\mathbf A^{\mathrm {T}} \otimes  \mathbf B^{\mathrm {T}},
\end{equation}
\begin{equation}\label{kp3}
%\notag
(\mathbf A \otimes \mathbf B)^{\mathrm {-1}}=\mathbf A^{\mathrm {-1}} \otimes  \mathbf B^{\mathrm {-1}},
\end{equation}
\begin{equation}\label{kp2}
%\notag
(\mathbf A \otimes \mathbf C)(\mathbf B \otimes \mathbf D)=(\mathbf A\mathbf B)\otimes(\mathbf C\mathbf D),
\end{equation}
\begin{equation}\label{KPrank}
  \rm rank(\mathbf A \otimes  \mathbf B)=\rm rank(\mathbf A)\rm rank(\mathbf B),
  \end{equation}
\begin{equation}\label{vec1}
%\notag
 \rm vec(\mathbf A +\mathbf B ) =\rm vec(\mathbf A)+\rm vec(\mathbf B),
\end{equation}
\begin{equation}\label{vec2}
%\notag
 \rm vec(\mathbf A \mathbf B \mathbf C) =(\mathbf C^{\mathrm T} \otimes \mathbf A) \rm vec(\mathbf B ),
\end{equation}
\begin{equation} \label{tensorkron}
 \begin{split}
 \shoveleft \rm vec( \mathcal S \times_1 \mathbf A^{(1)}  \times_2 \mathbf A^{(2)} \dots  \times_n \mathbf A^{(n)}  ) =  \\
    (\mathbf A^{(n)}   \otimes \mathbf A^{(n-1)} \otimes \dots \otimes \mathbf A^{(1)} )^{\mathrm T} \rm vec( \mathcal S ).
\end{split}
\end{equation}

%Some properties that will be used later are presented here\cite{KP,zhang2017matrix}.
%\begin{equation}\label{kp1}
%%\notag
%(\mathbf A \otimes \mathbf B)^{\mathrm {T}}=\mathbf A^{\mathrm {T}} \otimes  \mathbf B^{\mathrm {T}}
%\end{equation}
%\begin{equation}\label{kp3}
%%\notag
%(\mathbf A \otimes \mathbf B)^{\mathrm {-1}}=\mathbf A^{\mathrm {-1}} \otimes  \mathbf B^{\mathrm {-1}}
%\end{equation}
%\begin{equation}\label{kp2}
%%\notag
%(\mathbf A \otimes \mathbf C)(\mathbf B \otimes \mathbf D)=(\mathbf A\mathbf B)\otimes(\mathbf C\mathbf D)
%\end{equation}
%\begin{equation}\label{KPrank}
%  \rm rank(\mathbf A \otimes  \mathbf B)=\rm rank(\mathbf A)\rm rank(\mathbf B)
%  \end{equation}
%\begin{equation}\label{vec1}
%%\notag
% \rm vec(\mathbf A +\mathbf B ) =\rm vec(\mathbf A)+\rm vec(\mathbf B)
%\end{equation}
%\begin{equation}\label{vec2}
%%\notag
% \rm vec(\mathbf A \mathbf B \mathbf C) =(\mathbf C^{\mathrm T} \otimes \mathbf A) \rm vec(\mathbf B )
%\end{equation}
%\begin{equation} \label{tensorkron}
% \begin{split}
% \shoveleft \rm vec( \mathcal S \times_1 \mathbf A^{(1)}  \times_2 \mathbf A^{(2)} \dots  \times_n \mathbf A^{(n)}  ) =  \\
%    (\mathbf A^{(n)}   \otimes \mathbf A^{(n-1)} \otimes \dots \otimes \mathbf A^{(1)} )^{\mathrm T} \rm vec( \mathcal S )
%\end{split}
%\end{equation}

 \begin{figure}[t]
    \centering
  \includegraphics[width=0.4\textwidth]{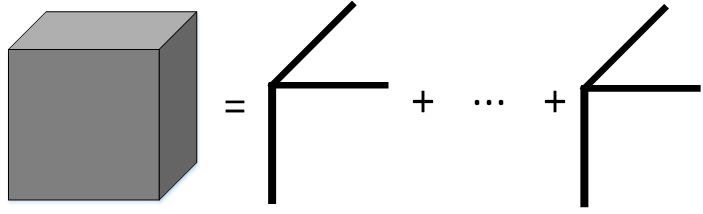}\\
  \caption{\quad Sktech map of the calculation of the coskewness tensor $\mathcal S$. }
  \label{sketch}
\end{figure}

%\subsection{Coskewness Tensor}
%In PSA, the coskewness tensor, the analogue of the covariance matrix in PCA, is constructed to calculate the skewness of the image in the direction $ \mathbf u$.
%
%Assuming that the image data set is $\mathbf X=[\mathbf x_1,\mathbf x_2,\dots, \mathbf x_{N}] \in R^{L \times N}$, where $\mathbf x_i $ is the $ L \times 1 $ vector and $ N $ is the number of pixels. Before calculating the coskewness tensor, the image should be centralized and whitened by
%\begin{equation}\label{whiten}
%\mathbf R=\mathbf {F}^{\mathrm {T}}(\mathbf X-\mathbf m)
%\end{equation}
%where $ \mathbf m =(1/N) \sum_{i=1}^{N} \mathbf x_i $ is the mean vector and $\mathbf R=[\mathbf r_1,\mathbf r_2, \dots , \mathbf r_N] $ is the whitened image. $ \mathbf F=\mathbf E\mathbf D^{-\frac {1} {2}  }  $ is called the whiten operator, where   $\mathbf E $ represents the eigenvector matrix of the covariance matrix and $ \mathbf D $ is the corresponding eigenvalue diagonal matrix.
%
%Then, the coskewness tensor is calculated by
%\begin{equation} \label{tensorcompute}
%\mathcal S =\frac 1 N \sum \limits_{i=1}^{N} \mathbf r_i \circ \mathbf r_i \circ \mathbf r_i
%\end{equation}
%
% Fig.~\ref{sketch}  shows a sketch map of the calculation of $\mathcal S$.  Obviously, $\mathcal S  $ is a supersymmetric  tensor of size $ L \times L \times L $. It  can be time-consuming in practice. A more effective method can be found in \cite{PSA}.

\subsection{PSA Algortithm}
In PSA, the coskewness tensor, the analogue of the covariance matrix in PCA, is constructed to calculate the skewness of the image in the direction $ \mathbf u$.

Assuming that the image data set is $\mathbf X=[\mathbf x_1,\mathbf x_2,\dots, \mathbf x_{N}] \in R^{L \times N}$, where $\mathbf x_i $ is the $ L \times 1 $ vector and $ N $ is the number of pixels.  The image should  first be centralized and whitened by
\begin{equation}\label{whiten}
\mathbf R=\mathbf {F}^{\mathrm {T}}(\mathbf X-\mathbf m),
\end{equation}
where $ \mathbf m =(1/N) \sum_{i=1}^{N} \mathbf x_i $ is the mean vector and $\mathbf R=[\mathbf r_1,\mathbf r_2, \dots , \mathbf r_N] $ is the whitened image. $ \mathbf F=\mathbf E\mathbf D^{-\frac {1} {2}  }  $ is called the whitening operator, where   $\mathbf E $ represents the eigenvector matrix of the covariance matrix and $ \mathbf D $ is the corresponding eigenvalue diagonal matrix.

Then, the coskewness tensor is calculated by
\begin{equation} \label{tensorcompute}
\mathcal S =\frac 1 N \sum \limits_{i=1}^{N} \mathbf r_i \circ \mathbf r_i \circ \mathbf r_i.
\end{equation}

 Fig.~\ref{sketch}  shows a sketch map of the calculation of $\mathcal S$.  Obviously, $\mathcal S  $ is a supersymmetric  tensor with a size  of $ L \times L \times L $. %It  can be time-consuming in practice. A more effective method can be found in \cite{PSA}.

 Then the skewness of an image in any direction $ \mathbf u $ can be calculated by
\begin{equation}\label{tensorskew}
\rm skew(\mathbf u )=\mathcal S  \times_{1} \mathbf u  \times_{2} \mathbf u  \times_{3} \mathbf u,
\end{equation}
where  $ \mathbf u \in R^{L \times 1} $ is a unit vector, i.e., $\mathbf u^{\mathrm {T}}\mathbf u=1$. %"$\times_{n}$" ($\{n=1,2,3\}$) denotes the $n$-mode  product and

So the optimization model  can be formulated as %to search for the maximum skewness direction
\begin{equation} \label{model}
\begin{cases}
\max\limits_{\mathbf u}  \mathcal S  \times_{1} \mathbf u  \times_{2} \mathbf u  \times_{3} \mathbf u   \\
s.t. \mathbf u^{\mathrm {T}}\mathbf u=1
\end{cases}.
\end{equation}
Using the Lagrangian method, the problem is equivalent to solving the equation
\begin{equation} \label{solution}
\mathcal S  \times_{1} \mathbf u   \times_{3}\mathbf u  =\lambda \mathbf u.
\end{equation}
%(aunit vector u  is an eigenvector of T, with corresponding eigenvalue)Lim (2005) and Qi (2005).

 A fixed-point method is performed to calculate  $\mathbf u$ for each unit, which can be expressed as follows:
 \begin{equation} \label{iteration}
\begin{cases}
\mathbf u=\mathcal S  \times_{1} \mathbf u   \times_{3}\mathbf u   \\
\mathbf u=\mathbf u/ \Vert \mathbf u \Vert_2
\end{cases}.
\end{equation}
%we can select an initial point randomly and iterate the equation (\ref{iteration}) until some stop condition is satisfied.

If it does have a fixed-point, the solution $\mathbf u $ is called the first principal skewness direction and $ \lambda $ is the skewness of the image in the direction $ \mathbf u $. Equivalently, $(\lambda ,\mathbf u )$ is also called the eigenvalue/eigenvector pair of a tensor, introduced by Lim \cite{lim} and Qi\cite{qi}.

To prevent the second eigenvector from converging to the same one as the first, the algorithm projects the data into the orthogonal complement space of $\mathbf u$, which is equivalent to generate a new tensor by  calculating the $n$-mode product
 \begin{equation}\label{oldupdate}
 \mathcal S=\mathcal S  \times_{1} \mathbf P_{\mathbf u }^{\bot} \times_{2} \mathbf P_{\mathbf u }^{\bot} \times_{3} \mathbf P_{\mathbf u }^{\bot},
\end{equation}
where $ \mathbf P_{\mathbf u }^{\bot} =\mathbf I- \mathbf u (\mathbf u^{\mathrm {T}} \mathbf u)^{-1} \mathbf u^{\mathrm {T}} $ is the orthogonal complement projection operator of $ \mathbf u $ and $ \mathbf I $ is the $ L \times L $ identity matrix.

Then,   the same iteration method, i.e., (\ref{iteration}), can be applied to  the new tensor  $ \mathcal S $ to obtain the second eigenvector and the following  process is conducted in the same manner.

%Obviously, it is a very repetitive work. In fact, instead of calculating the new tensor each time,  the fixed-point iteration can be directly expressed as
% \begin{equation} \label{tensorsimplified}
%\begin{cases}
%\mathbf u=\mathcal S  \times_{1} (\mathbf P_{\mathbf U_{m} }^{\bot}  \mathbf u)  \times_{2} \mathbf P_{\mathbf U_{m} }^{\bot}    \times_{3} (\mathbf P_{\mathbf U_{m} }^{\bot}  \mathbf u )  \\
%\mathbf u=\mathbf u/ \Vert \mathbf u \Vert_2
%\end{cases}
%\end{equation}
% %\begin{equation} %\label{tensorsimplified}
%%\begin{cases}
%%\mathbf u_i^{(k+1)}=\mathcal S  \times_{1} (\mathbf P_m \mathbf u_i^{(k)})  %\times_{2} \mathbf P_m   \times_{3} (\mathbf P_m \mathbf u_i^{(k)} )  \\
%%\mathbf u_i^{(k+1)}=\mathbf u_i^{(k+1)}/ \Vert \mathbf u_i^{(k+1)} \Vert_2
%%\end{cases}
%%\end{equation}
%where  $ \mathbf P_{\mathbf U_{m} }^{\bot} =\mathbf I- \mathbf U_{m} \mathbf U_{m}^{\#} $ is the project matrix orthogonal to the space spanned by the $ m $ eigenvectors  that have been determined before, and where $ \mathbf U_{m}=[\mathbf u_1,\mathbf u_2 ,\dots, \mathbf u_m ] $ and $ \mathbf U_{m}^{\#}=(\mathbf U_{m}^{\mathrm {T}} \mathbf U_{m})^{-1} \mathbf U_{m}^{\mathrm {T}} $ is the pseudoinverse matrix of $\mathbf U_{m}$.

\subsection{Deficiencies  Of PSA}
As  mentioned before, an orthogonal complement operator is introduced in PSA in order to prevent the next eigenvector from  converging to the  eigenvectors that have been determined previously. As is well known, the eigenvectors of a real symmetric matrix is naturally orthogonal to each other. However, this may not hold when generalized to the higher-order cases.

We here present a simple example to  illustrate this phenomenon. Consider a supersymmetric tensor $ \mathcal S \in R^{2 \times 2 \times 2}$, whose two frontal  slices are
\begin{equation}
   \begin{split}
  \notag
  \mathbf S_{1}=\begin{bmatrix}
   2  & -1   \\
   -1  &  0.8
   \end{bmatrix},
      \mathbf S_{2}=\begin{bmatrix}
   -1  & 0.8  \\
   0.8  &  0.3
   \end{bmatrix}
   \end{split}.
\end{equation}

 %Initializing a random unit vector and applying  the fixed-point method as stated in (\ref{solution}), we can easily obtain the two eigenvectors, which are given by
 It can easily verifed that its two eigenvectors are
\begin{equation}
\notag
  \mathbf u_1=[0.8812,-0.4727]^{\mathrm T},\mathbf u_2=[0.3757 , 0.9267]^{\mathrm T},
\end{equation}
%and the corresponding eigenvalues are $ \lambda_1=$, $ \lambda_2=$.
and their inner product is $   \mathbf u_1^{\mathrm T}\mathbf u_2 =-0.1070$, which means that they are nonorthogonal. However,  the results obtained by  the PSA algorithm are
 \begin{equation}
\notag
  \mathbf u_1^{\rm PSA}=[0.8812,-0.4727]^{\mathrm T},\mathbf u_2^{\rm PSA}=[0.4727, 0.8812]^{\mathrm T},
\end{equation}
 which are orthogonal. It is apparent that $\mathbf u_2^{\rm PSA}$ deviates  from $\mathbf u_2$, and  they have a $6.1430^{\circ} $  angle.  The error  is caused by  the orthogonal constraint  used  in PSA.  Therefore, how to  obtain the more accurate  eigenpairs is  significant.

\section{NPSA}

%\subsection{Deficiency of PSA Algorithm}

\subsection{New Search Strategy}

%Here, we first give the new search strategy and then illustrate why this method can enlarge the search space  effectively and thus obtain the more accurate eigenpairs  in the next section.
Here, we first give the new search strategy in NPSA and then theoretically illustrate why this method can obtain the more accurate eigenpairs  in the next section.

Similar to PSA, the first eigenvector $ \mathbf u $ can be  obtained according to (\ref{iteration}). The subsequent steps are presented as follows:
%the following steps are performed:

\textbf{(1)}: vectorize the tensor $\mathcal S $  into a vector $ \mathbf s$. Usually, The vectorization of a third-order tensor is  defined as the vertical arrangement of column vectorization of the front-slice matrix \cite{zhang2017matrix}, i.e., $\mathbf s= \rm vec(\mathcal S )=[\mathbf {s}_1^{\mathrm {T}},\mathbf {s}_2^{\mathrm {T}},\dots,  \mathbf {s}_{\it L}^{\mathrm {T}} ]^{\mathrm {T}}$ $\in$  $R^{ L^{3} \times 1}$, where  $\mathbf {s}_i \in R^{L^{2} \times 1} $  is the vector generated by the $i$-th frontal slice $ \mathbf S_i $ , i.e., $\mathbf {s}_i=\rm vec(\mathbf S_i )$ . %there are three different methods according to the differences of slice matrices. However, since the coskewness tensor is supersymmetric, there is no difference between these rearrangement. In our method, we rearrange the elements in the order of ascending mode.

\textbf{(2)}: compute a new vector via the $3$-times Kronecker product of the vector $ \mathbf u $, denoted by $ \mathbf u ^{\otimes ^{3}}=\mathbf u \otimes  \mathbf u  \otimes  \mathbf u$ .

\textbf{(3)}:  compute the  orthogonal complement  projection matrix of $ \mathbf u ^{\otimes ^{3}}$, which can be expressed as
\begin{equation}\label{step3}
 \mathbf {P}^{\bot}_{\mathbf u ^{\otimes ^{3}} }   =   \mathbf I^{\otimes ^{3}} -                \mathbf u ^{\otimes ^{3}}        [  (\mathbf u ^{\otimes ^{3}} )^{\mathrm {T}} \mathbf u ^{\otimes ^{3}}  ]^{-1} (\mathbf u ^{\otimes ^{3}} )^{\mathrm {T}},
 \end{equation}
 where $ \mathbf I^{\otimes ^{3}} $ is the 3-times Kronecker product of the matrix $\mathbf I $ of  (\ref{oldupdate}) and  it is a $ L^{3} \times L^{3} $ identity matrix.
 %We use the notation   to keep consistent with the theoretical analysis later.
  % differentiate it with that of equation (\ref{oldupdate}).

\textbf{(4)}: multiply $ \mathbf {P}^{\bot}_{\mathbf u ^{\otimes ^{3}} } $ and the vectorized tensor $ \mathbf s$ in step (1) and then  perform the $\rm unvec $ operation to obtain  a new tensor. Without causing ambiguity and to  have a concise form, we still express the updated tensor as  $\mathcal S $. Thus we can have
\begin{equation}\label{step4}
\mathcal S= \rm unvec (\mathbf {P}^{\bot}_{\mathbf u ^{\otimes ^{3}} } \cdot \mathbf s ).
\end{equation}

Then we can obtain the second eigenvector by performing the fixed-point scheme, i.e.,   (\ref{iteration}),  to the new updated tensor and similarly repeat the above process until we get all or pre-set eigenpairs.
%The new algorithm is named  nonorthogonal principal skewness analysis (NPSA).

For the example mentioned in the previous section,  the two eigenvectors obtained by NPSA   are
 \begin{equation}
\notag
  \mathbf u_1^{\rm NPSA}=[0.8812,-0.4727]^{\mathrm T},\mathbf u_2^{\rm NPSA}=[0.3351, 0.9422]^{\mathrm T}.
\end{equation}
%Thus, $ \Vert \mathbf u_2-\mathbf u_2^{\rm NPSA} \Vert =0.0434 < \Vert \mathbf u_2-\mathbf u_2^{\rm PSA} \Vert =0.1072$. So the results obtained by NPSA can be more acccurate than that of PSA. Fig.~\ref{eigenvectors}  shows an intuitive comparison between the reference eigenvectors and those  obtained by different methods.

 %Fig.~\ref{eigenvectors}  shows the comparison between the reference eigenvectors and those  obtained by different methods.
  The angle between $\mathbf u_2^{\rm NPSA} $ and $\mathbf u_2 $ is $2.4874^{\circ}$, which is  less than that between $\mathbf u_2^{\rm PSA} $ and $\mathbf u_2 $, as shown in Fig.~\ref{eigenvectors}. It means that   the  new search strategy presented  in NPSA is actually efficient.
 %and it can easily checked that they are nonorthogonal to each other.

%\begin{figure}[t]
%    \centering
%  \includegraphics[width=0.4\textwidth]{pic/case222.png}
%  \caption{\quad Slices of a third-order tensor, i.e., horizontal slices $ \mathbf A_{i::}$, lateral slices $\mathbf A_{:j:} $ and  frontal slices $\mathbf A_{::k}$, respectively.}
%  \label{circle}
%\end{figure}

\begin{figure}
  \centering
\begin{tikzpicture}[scale=2]
   %[xshift=-2cm]
\draw[thick] (0,0) circle(1);
\draw[->] (-1.2,0)  -- (1.2,0);
\draw[->]  (0,-1.2)  -- (0,1.2);
\fill  (0:0)  circle(0.5pt);
%  刻度

\draw (-1, -0.02)  -- (-1, 0.02)  ;
\node[below,outer sep=2pt, fill=white,font=\scriptsize] (P)  at (180:1.15) {$ -1$} ;

\draw (-0.5, -0.02)  -- (-0.5, 0.02)  ;
\node[below,outer sep=2pt, fill=white, font=\scriptsize] (P)  at (180:0.5) {$ -0.5$ } ;

\draw (0.5, -0.02)  -- (0.5, 0.02);
\node[below,outer sep=2pt, fill=white, font=\scriptsize] (P)  at (0:0.5) {$ 0.5$ } ;

\draw (1, -0.02)  -- (1, 0.02)  ;
\node[below,outer sep=2pt, fill=white, font=\scriptsize] (P)  at (0:1.1) {$ 1$ } ;

\draw ( -0.02, 1)  -- ( 0.02, 1)  ;
\node[left,outer sep=2pt, fill=white, font=\scriptsize] (P)  at (90:1.1) {$ 1$ } ;

\draw ( -0.02, 0.5)  -- ( 0.02, 0.5)  ;
\node[left,outer sep=2pt, fill=white, font=\scriptsize] (P)  at (90:0.5) {$ 0.5$ } ;

%\draw ( -0.02, 0)  -- ( 0.02, 0)  ;
%\node[left,outer sep=2pt, fill=white, font=\scriptsize] (P)  at (-0.2:0.2) {$ 0$ } ;
\node[above,left,outer sep=2pt, fill=white, font=\scriptsize] (P) at (90:0.13) {$ 0 $};
\draw ( -0.02, -0.5)  -- ( 0.02, -0.5)  ;
\node[left,outer sep=2pt, fill=white, font=\scriptsize] (P)  at (270:0.5) {$ -0.5$ } ;

\draw ( -0.02, -1)  -- ( 0.02, -1)  ;
\node[left,outer sep=2pt, fill=white, font=\scriptsize] (P)  at (270:1.15) {$ -1$ } ;

\draw[->] (0,0) -- (0.8812,-0.4727);
\fill  (0.8812,-0.4727)  circle(0.5pt);
\draw[->,color=red] (0,0) -- (0.3351,0.9422);
\fill[color=red]  (0.3351,0.9422)  circle(0.5pt);
\draw[->,color=blue] (0,0) -- (0.4727,0.8812);
\fill[color=blue]  (0.4727,0.8812)  circle(0.5pt);
\draw[->] (0,0) -- (0.3757,0.9267);
\fill  (0.3757,0.9267)  circle(0.5pt);
%\draw  (0.08812,-0.04727) -- (0.096008,0.059094) -- (0.04727,0.08812);

\node (P)  at (325:1.1) {$ \mathbf u_1 =\mathbf u_1^{\rm PSA}=\mathbf u_1^{\rm NPSA} $ } ;
\node (P)  at (50:1.2) {\color{blue} $ \mathbf u_2^{\rm PSA}$ } ;
\node (P)  at (75:1.2) {  \color{red}  $ \mathbf u_2^{ \rm NPSA}$ } ;
\node (P)  at (65:1.1) {$ \mathbf u_2$ } ;
%\draw[dashed]  (0.3351,0.9422) --  (0.30,1.15);
%\draw[->] (0,0) -- (1,1);
%\draw[->] (0,0) -- (-1,0);
%\draw[->] (0,0) -- (0,1);
%\coordinate[label=$ \mathbf u_2^{\rm NPSA}$  ]
  \end{tikzpicture}
  \caption{ The  distribution  of the true eigenvectors, and those obtained by NPSA and PSA  in an unit circle.}
  \label{eigenvectors}
\end{figure}
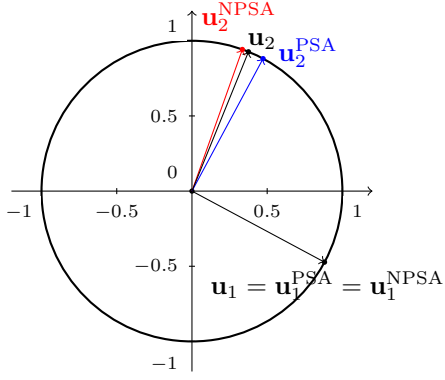

   %% 文字标签刻度
 %  \foreach \t in {-1.0, -0.5, 0,  0.5, 1.0} {
%   \draw ({\t}, -0.05)  -- ({\t}, 0.05)  ;
% %  \node[below, outer sep=2pt, fill=white,  font= small]
%%    at ({rad(\t)}, 0  )   {\ang{\t}};
%    }

%In PSA, as shown in  (\ref{oldupdate}), the $n$-mode product of the tensor $ \mathcal S $ and $ \mathbf P_{\mathbf u }^{\bot}$ is calculated to obtain the new tensor $  \mathcal S $. In the new strategy, we need to convert the tensor into the vector first and  the main difference from the original PSA algorithm is  in step (2), where we utilize the  Kronecker product of the vector $\mathbf u$. In the next subsection, we will present a more compact version to take the computational complexity  into consideration. However, we still claim that the step (1) $\sim $ step (4) provides us a more insightful perspective to understand why the new strategy can enlarge the search space of the new eigenvector by utilizing the Kronecker product of the previous one. We will discuss these details in  section  IV.
%the motivation of the new strategy. We will discuss these details in  section  IV.

%It can be checked that the  equation (\ref{matrixvec}) is a special case for Theorem 1, since  the multiplication of matrix $ \mathbf A \mathbf B \mathbf C $ can be rewritten as   $ \mathbf B \times_{1} \mathbf A \times_{2} \mathbf C^{T} $.

\subsection{ Complexity Reduction}
%Practical Implementation Considerations
 In this subsection, we take   the  complexity  into
consideration.  It can be observed that although the  strategy shown in step (1) $\sim $ step (4) is efficient, there are still some problems in the implementation: 1) it needs to perform the $\rm vec $ and $\rm unvec $ operations repeatedly; 2)  computing the orthogonal projection matrix $ \mathbf {P}^{\bot}_{\mathbf u ^{\otimes ^{3}} }  $  takes up an  $ L^{3} \times L^{3} $ memory. When $ L $ becomes larger, especially for  hyperspectral images with tens or hundreds of bands, the computational burden is huge and unbearable.  So in the following, we try to  reduce the  computational complexity and  to save the  storage memory  simultaneously.

%In step (3), according to the properties of the Kronecker product\cite{KP,zhang2017matrix}
For  (\ref{step3}), based on (\ref{kp1}) and (\ref{kp2}),
%\begin{gather*}
% (\mathbf A \otimes \mathbf B)^{T}=\mathbf A^{T} \otimes  \mathbf B^{T}   \\
% (\mathbf A \otimes \mathbf C)(\mathbf B \otimes \mathbf D)=(\mathbf A\mathbf B)\otimes(\mathbf C\mathbf D)
% \end{gather*}
%\begin{equation}%\label{kp1}
%%\notag
%(\mathbf A \otimes \mathbf B)^{\mathrm {T}}=\mathbf A^{\mathrm {T}} \otimes  \mathbf B^{\mathrm {T}}
%\end{equation}
%and
%\begin{equation}%\label{kp2}
%%\notag
%(\mathbf A \otimes \mathbf C)(\mathbf B \otimes \mathbf D)=(\mathbf A\mathbf B)\otimes(\mathbf C\mathbf D)
%\end{equation}
we can derive
\begin{equation}\label{simple1}
\begin{split}
(\mathbf u ^{\otimes ^{3}} )^{\mathrm {T}} \mathbf u ^{\otimes ^{3}}
&=(\mathbf u \otimes  \mathbf u  \otimes  \mathbf u)^{\mathrm {T}}  ( \mathbf u \otimes  \mathbf u  \otimes  \mathbf u   )  \\
&= (\mathbf u^{\mathrm {T}}\mathbf u)\otimes  (\mathbf u^{\mathrm {T}}\mathbf u)  \otimes  (\mathbf u^{\mathrm {T}}\mathbf u)   \\
&=1.
 \end{split}
\end{equation}

It means that  the vector generated by the $3$-times Kronecker product of  a unit vector  is still with a unit length. In this way,  (\ref{step3}) can be simplified as
\begin{equation} \label{simple2}
\begin{split}
\mathbf {P}^{\bot}_{\mathbf u ^{\otimes ^{3}} }
&=\mathbf I^{\otimes ^{3}}- \mathbf u ^{\otimes ^{3}} (\mathbf u ^{\otimes ^{3}} )^{\mathrm {T}}    \\
&=\mathbf I^{\otimes ^{3}}-  (\mathbf u\mathbf u^{\mathrm {T}})\otimes  (\mathbf u\mathbf u^{\mathrm {T}})  \otimes  (\mathbf u\mathbf u^{\mathrm {T}}).
\end{split}
\end{equation}

%%Similarly, we can further denote   (\ref{simple2}) as
% \begin{equation} \label{simple3}
%\mathbf {P}^{\bot}_{\mathbf u ^{\otimes ^{3}} }  =\mathbf I^{\otimes ^{3}}-  (\mathbf u\mathbf u^{\mathrm {T}})\otimes  (\mathbf u\mathbf u^{\mathrm {T}})  \otimes  (\mathbf u\mathbf u^{\mathrm {T}})
%\end{equation}

According to (\ref{step4}), the new tensor  $\mathcal S $ can be  updated by
\begin{equation}\label{etensor}
\begin{split}
\mathcal S
&= \rm unvec (\mathbf {P}^{\bot}_{\mathbf u ^{\otimes ^{3}} } \cdot \mathbf s ) \\
&=\rm unvec ([\mathbf I^{\otimes ^{3}}-  (\mathbf u\mathbf u^{\mathrm {T}})\otimes  (\mathbf u\mathbf u^{\mathrm {T}})  \otimes  (\mathbf u\mathbf u^{\mathrm {T}})] \cdot \mathbf s   )        \\
&=\mathcal S  - \rm unvec[((\mathbf u \mathbf  {u}^{\mathrm {T}})\otimes (\mathbf u \mathbf  {u}^{\mathrm {T}}) \otimes (\mathbf u \mathbf  {u}^{\mathrm {T}}  ))\cdot \mathbf s]    \\
&=\mathcal S  -\mathcal {\tilde S },
\end{split}
\end{equation}
where we  introduce an auxiliary tensor, denoted by
\begin{equation}
\mathcal {\tilde S }  =\rm unvec[((\mathbf u \mathbf  {u}^{\mathrm {T}})\otimes (\mathbf u \mathbf  {u}^{\mathrm {T}}) \otimes (\mathbf u \mathbf  {u}^{\mathrm {T}}  ))\cdot \mathbf s].
\end{equation}

For simplicity, let
%\begin{equation}
%(\mathbf u \mathbf  {u}^{\mathrm {T}} )_{ij}= u_iu_j
%\end{equation}

\begin{equation}
 \mathbf A=[(\mathbf u \mathbf  {u}^{\mathrm {T}}  )\otimes (\mathbf u \mathbf  {u}^{\mathrm {T}}  )]_{L^{2} \times L^{2}},
\end{equation}
then we can have
\begin{equation}\label{ekronecker}
\begin{split}
  (\mathbf u \mathbf  {u}^{\mathrm {T}} )\otimes (\mathbf u \mathbf  {u}^{\mathrm {T}}  ) \otimes (\mathbf u \mathbf  {u}^{\mathrm {T}} )
  &=\begin{bmatrix}
   u_1 u_1 \mathbf A  & \dots &  u_1 u_L \mathbf A  \\
   & \ddots  & \vdots  \\
   u_L u_1 \mathbf A  & \dots  & u_L u_L \mathbf A
   \end{bmatrix}.
   %_{L^{3} \times L^{3}}
   \end{split}
\end{equation}

Since
\begin{equation}
\mathbf s= \rm vec(\mathcal S )=[\mathbf {s}_1^{T},\mathbf {s}_2^{T},\dots,  \mathbf {s}_{\it L}^{T} ]^{T},
%\in R^{L^{3} \times 1}
\end{equation}
   we denote
 \begin{equation}\label{ess}
 \mathbf {\tilde s }= \rm vec (  \mathcal {\tilde S }  )=[\mathbf {\tilde s }_1^{T},\mathbf { \tilde s }_2^{T}, \dots, \mathbf { \tilde s}_{\it L}^{T}]^{T},
 %\in R^{L^{3} \times 1}
 \end{equation}
 where $\mathbf {\tilde s}_i $  is the vector generated by the $i$-th frontal slice $ \mathbf {\tilde S}_i $, i.e.,  $\mathbf {\tilde s}_i=\rm vec(\mathbf {\tilde S}_i )$.

% \begin{center}
 %$ \rm vec (  \mathcal {\tilde S }  )=[\mathbf {\tilde s }_1^{T},\mathbf { \tilde s }_2^{T}, \dots, \mathbf { \tilde s}_{L}^{T}]^{T} \in R^{L^{3} \times 1} $
% \end{center}

Then, according to (\ref{ekronecker}) $\sim$  (\ref{ess}), we can derive
 \begin{equation}
  \mathbf  {\tilde {s}} _j =\sum \limits _{i=1}^{L} u_j u_i   \mathbf A  \cdot \mathbf {s} _i = u_j \sum \limits _{i=1}^{L}   u_i \mathbf A \cdot \mathbf {s} _i.
 \end{equation}

 The $j$-th slice of the auxiliary tensor  can be expressed as
 \begin{equation}\label{simpleversion}
\begin{split}
\tilde {{\mathcal S}}_j
 &=\rm unvec(\mathbf {\tilde {s} }_j )   \\
 &= u_j \sum \limits _{i=1}^{L}  u_i \rm {unvec}( \mathbf {A} \cdot \mathbf {s} _i)     \\
 &= u_j \sum \limits _{i=1}^{L} u_i \rm unvec \{ [(\mathbf u \mathbf  {u}^{\mathrm T})\otimes (\mathbf u \mathbf  {u}^{\mathrm T})]  \cdot \mathbf s_i   \}  \\
 &= u_j \sum \limits _{i=1}^{L}  u_i (\mathbf u \mathbf  {u}^{\mathrm T} {\mathbf S}_i \mathbf u \mathbf  {u}^{\mathrm  T}  ),
\end{split}
\end{equation}
where  (\ref{vec1}) and  (\ref{vec2}) are utilized.
%where we use the property of the vectorization operator\cite{zhang2017matrix}:
%\begin{equation}
%%\notag
% \rm vec(\mathbf A +\mathbf B ) =\rm vec(\mathbf A)+\rm vec(\mathbf B)
%\end{equation}
%and
%\begin{equation}
%%\notag
% \rm vec(\mathbf A \mathbf B \mathbf C) =(\mathbf C^{\mathrm T} \otimes \mathbf A) \rm vec(\mathbf B )
%\end{equation}

Recalling the definition of the $n$-mode product, (\ref{simpleversion}) can be equivalent  to
\begin{equation}
 \tilde {\mathcal S}=\mathcal S \times_1 (\mathbf u \mathbf  {u}^{\mathrm T} )  \times_2 (\mathbf u \mathbf  {u}^{\mathrm T} )  \times_3 (\mathbf u \mathbf  {u}^{\mathrm T} ),
\end{equation}
so the new updated tensor can be expressed as
\begin{equation} \label{updatedtensor}
 \mathcal S=\mathcal S-\mathcal S \times_1 (\mathbf u \mathbf  {u}^{\mathrm T} )  \times_2 (\mathbf u \mathbf  {u}^{\mathrm T} )  \times_3 (\mathbf u \mathbf  {u}^{\mathrm T} ).
\end{equation}

%As shown in eqution (\ref{simpleversion}), we can compute the new tensor slice by slice.
Thus, we obtain a more compact representation for  the  tensor update. We name it  the improved strategy, as opposed to the originally proposed one described in  step (1) $\sim $ step (4). It should be noted that the subtraction operation in (\ref{updatedtensor}) corresponds to the orthogonal  complement projection operation in (\ref{step3}).

Interestingly, we can compare (\ref{updatedtensor}) with the update formula of PSA defined in  (\ref{oldupdate}), which we  can restate here
\begin{equation}\label{oldmethod}
%\notag
 \mathcal S=\mathcal S \times_1 (\mathbf I-\mathbf u \mathbf  {u}^{\mathrm T} )  \times_2 (\mathbf I-\mathbf u \mathbf  {u}^{\mathrm T} )  \times_3 (\mathbf I-\mathbf u \mathbf  {u}^{\mathrm T} ),
\end{equation}
since $  \mathbf  {u}^{\mathrm T}\mathbf u =1$.

%skewness value

In a sense, two strategies shown in   (\ref{updatedtensor})   and  (\ref{oldmethod}) differ in the order in which they perform the orthogonal  complement projection and the $n$-mode product operation. PSA  generates the orthogonal complement projection matrix first and then calculate the $n$-mode product to update a new  tensor. In contrast,  NPSA  first  obtains  an  auxiliary tensor via the $n$-mode product, followed by the orthogonal complement projection  operation.
%$\mathcal {\tilde S} $ by mode-n product of tensor $ \mathcal S $ with the matrix $ \mathbf u \mathbf  {u}^{T} $

%In a sense, two strategies shown in   (\ref{updatedtensor})   and  (\ref{oldmethod}) differ in the order in which they perform the orthogonal  complement projection and the $n$-mode product operation. PSA algorithm generates the orthogonal complement projection matrix first and then calculate the $n$-mode product to update a new  tensor. In NPSA, we first  obtain  an  auxiliary tensor via the $n$-mode product
%%$\mathcal {\tilde S} $ by mode-n product of tensor $ \mathcal S $ with the matrix $ \mathbf u \mathbf  {u}^{T} $
%and then perform the orthogonal complement projection  operation, i.e., the subtraction computation.

%Of Two Strategies

\subsection{Complexity Comparison }
Here, we give a detailed comparison for the two different strategies  from two aspects, including the  required maximum storage memory and the computational  complexity. %of the two strategies.

On the one hand,  the original  strategy needs to calculate a  large-scale orthogonal projection matrix of size $ L^{3} \times L^{3} $ and rearrange the elements repeatedly, while the improved  version only takes up  $ L \times L \times L$ memory to store the auxiliary tensor, which can  greatly save the memory.

On the other hand,  the  computational complexity of both step (3) and step (4) is $O(L^6)$, which is very time-consuming, especially when $L$ is large. In contrast, the improved version can have a lower computational complexity. It can be checked that   the computational complexity to update the auxiliary tensor  in (\ref{updatedtensor}) is   $ O(L^3) $.  Table~\ref{cc1} concludes the  complexity  comparison of the two strategies.

 % Note that even though  (\ref{updatedtensor}) has a more compact representation, we do not intend to adopt it to update the auxiliary tensor, since an extra software, such as the Tensor Toolbox for MATLAB, is needed to compute the $n$-mode product. We can still use  (\ref{simpleversion}) to obtain the  tensor slice by slice in that we can only  simply compute matrix-vector  or vector-vector multiplication.
\begin{table}[t]
    \normalsize
    %\large
    \centering
    \caption{\\ Comparison of the two strategies with respect to maximum storage memory and computational complexity.  }
    \label{cc1}
\renewcommand\arraystretch{1.3}
    \begin{tabular}{  c  c  c    }
		\hline
		%\multicolumn{2}{c}{Item} \\
		\cline{1-3}
		  &  original  &  improved \\
		\hline
        \cline{1-3}	
          maximum storage memory  & $L^3 \times L^3 $ & $L \times L \times L $   \\
          computational complexity  &  $O(L^6)$ &  $O(L^3) $\\
		\hline
	\end{tabular}
\end{table}

\begin{algorithm}[t]
%5[htbp!]
\caption{Nonorthogonal PSA (NPSA)}
\label{Algo.1}
%\end{algorithm}
\begin{algorithmic}[1]
\REQUIRE
 image data $\mathbf X  $, and the number of the principal skewness
directions, $ p $.
%\in R^{L \times N}
\ENSURE
the transformation matrix $\mathbf U $  and the corresponding transformed image $\mathbf Y= \mathbf U^{\mathrm T} \mathbf R  $.

\% \textbf{ main precedure }:\%
%\STATE
%compute the mean vector of the image $ \mathbf m =\rm mean(\mathbf X)$.
%\STATE
%construct the whitening  matrix $ \mathbf F_p=\mathbf E_p \mathbf D_p^{-\frac {1} {2}  }  $, where  $\mathbf E_p $ is the  first  $ p $ columns of the eigenvector matrix of the covariance matrix, and $ \mathbf D_p $ is the corresponding eigenvalue diagonal matrix.
\STATE
centralize a nd whiten the data according to (\ref{whiten}).
%: $ \mathbf R=\mathbf {F}_p^{\mathrm T}(\mathbf X-\mathbf m)$.
\STATE
calculate  the coskewness tensor $\mathcal S $. %according to equation ()

\% \textbf{ main loop  }:\%
%search for projection directions
\FOR{$ i=1:p $}
\STATE
 let  $k=0$.
 %, and $\mathbf u_l^{(k)}=\mathbf U_{:l}$
% /\Vert {\mathbf U_{:l}}\Vert_2
\STATE
initialize the  $ \mathbf u_i^{(k)} $ with random unit vector.
\WHILE{stop conditions are not met}  \label{whileloop}
\STATE
$\mathbf u_i^{(k+1)} =\mathcal S  \times_{1} \mathbf u_i^{(k)}   \times_{3} \mathbf u_i^{(k)}$,
\STATE
$\mathbf u_i^{(k+1)} =\mathbf u_i^{(k+1)} /\Vert \mathbf u_i^{(k+1)} \Vert_2  $,
 %update the vector $ \mathbf {u_i }^{(k)} $ as shown in  equation
\STATE
$k=k+1$,
\ENDWHILE
\STATE
$\mathbf U_{:i}=\mathbf u_i^{(k+1)}$,

%/\Vert {\mathbf u_l^{(k+1)}}\Vert_2
\STATE
$\mathbf u=\mathbf u_i^{(k+1)}$,
\STATE \label{aux}
%compute the auxiliary tensor $ \mathcal {\tilde {S}}  $ slice by slice according to (\ref{simpleversion}) and update the tensor by $ \mathcal S =  \mathcal S - \mathcal {\tilde {S}}  $.
 update the tensor according  to  (\ref{updatedtensor}).
%\STATE
%update the tensor by $ \mathcal S =  \mathcal S - \mathcal {\tilde {S}}  $.
\ENDFOR
%\STATE  \label{output}
%output the final transformed image $ \mathbf Y= \mathbf U^{\mathrm T} \mathbf R  $.
\end{algorithmic}
\end{algorithm}

Finally, the pseudo-code of NPSA is summarized in Algorithm~\ref{Algo.1}.

Some comments are described as follows. Generally, the stop conditions in step (\ref{whileloop}) include error tolerance $\epsilon $  and maximum times  $K$.  In this paper,  $\epsilon$ is set to  0.0001, and  $K$ is set to 50.    $\mathbf U \in R^{p \times p} $ is the  final nonorthogonal principal skewness transformation matrix.

%Some comments are described as follows. Generally, the stop conditions in step (\ref{whileloop}) include error tolerance $\epsilon $  and maximum times  $K$. For example, the iteration stops when $\Vert\mathbf u_i^{(k+1)}-\mathbf u_i^{(k)} \Vert_2 \le \epsilon$ or the iterations reach a maximum value $K$  . Both  $\epsilon$ and $K$ are thresholds. In this paper,  $\epsilon$ is set to  0.0001, and  $K$ is set to 50.   In step (\ref{output}), $\mathbf U \in R^{p \times p} $ is the  final nonorthogonal principal skewness transformation matrix and the $ p $ rows of  $\mathbf Y  \in R^{p \times N} $ are the $p $ principal skewness component of the image.
%In step (\ref{aux}), note that even though  equation (\ref{updatedtensor}) has a more compact representation, we still adopt equation (\ref{simpleversion}) to obtain the auxiliary tensor in that we can only compute matrix-vector multiplication in this way and the complexity can be reduced further, as shown in Table~\ref{cc1}.

  %Meanwhile, we compute the Amari error (AR)\cite{amari1996new} and total mean square error (TMSE) to evaluate the performance of these algorithms. Both two indexes show that the new proposed method obtains a better estimation in BIS application. Furthermore, we also prove that our algorithm shows the advantage of saving time with the time comparison experiment.

\section{Theoretical Analysis}

%It can be easily checked that the order does not matter and the result is the same  for   the  second-order  symmetric matrix. However, this is not the case for the high-order tensor due to the nonorthogonality of the eigenvectors.

In the above section, we have demonstrated that NPSA outperforms PSA using a simple example. Now,
 to  theoretically illustrate why the  former can obtain the more accurate solutions, we present  the following lemma.
 \\

 \begin{lemma} \label{lemma1}
 % Consider a $n \times m $  column-full rank matrix $ \mathbf S \in R^{} $ of full-column rank $ m $ in a $n$ dimensional vector space $\rm  V $ ($m \le n$)
 Consider an $n \times m $  column-full rank matrix $ \mathbf S $, it  holds that the orthogonal complement  of the space spanned by the  Kronecker product of $ \mathbf S $  always contains the space spanned by the  Kronecker product of its orthogonal complement operator, which can be expressed as follows
 \begin{equation} \label{lemmacontent}
  \rm R(  (\mathbf {P}^{\bot}_{\mathbf S} )^{\otimes ^{\it p}}    )  \subseteq  \rm R(  \mathbf {P}^{\bot}_{\mathbf S ^{\otimes ^{\it p}} }  )
  \end{equation}
  for  $ \forall $  $ p \in N^{+} $.
  \end{lemma}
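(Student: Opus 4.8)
The plan is to reduce the subspace inclusion to a single operator identity. Writing $\mathbf{P}_{\mathbf{S}} = \mathbf{S}(\mathbf{S}^{\mathrm{T}}\mathbf{S})^{-1}\mathbf{S}^{\mathrm{T}}$ for the orthogonal projector onto $\mathrm{R}(\mathbf{S})$, so that $\mathbf{P}^{\bot}_{\mathbf{S}} = \mathbf{I} - \mathbf{P}_{\mathbf{S}}$, I would first establish the key structural fact that the projector associated with the Kronecker power factorizes, namely $\mathbf{P}_{\mathbf{S}^{\otimes^{p}}} = (\mathbf{P}_{\mathbf{S}})^{\otimes^{p}}$. This follows by expanding $\mathbf{P}_{\mathbf{S}^{\otimes^{p}}} = \mathbf{S}^{\otimes^{p}}[(\mathbf{S}^{\otimes^{p}})^{\mathrm{T}}\mathbf{S}^{\otimes^{p}}]^{-1}(\mathbf{S}^{\otimes^{p}})^{\mathrm{T}}$ and applying (\ref{kp1}) to get $(\mathbf{S}^{\otimes^{p}})^{\mathrm{T}} = (\mathbf{S}^{\mathrm{T}})^{\otimes^{p}}$, then (\ref{kp2}) to get $(\mathbf{S}^{\otimes^{p}})^{\mathrm{T}}\mathbf{S}^{\otimes^{p}} = (\mathbf{S}^{\mathrm{T}}\mathbf{S})^{\otimes^{p}}$, then (\ref{kp3}) to invert this factor by factor, and finally (\ref{kp2}) once more to collapse the product into $(\mathbf{P}_{\mathbf{S}})^{\otimes^{p}}$. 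I note in passing that the column-full rank of $\mathbf{S}$ guarantees, via (\ref{KPrank}), that $\mathbf{S}^{\otimes^{p}}$ is itself column-full rank, so the inverse above exists and $\mathbf{P}^{\bot}_{\mathbf{S}^{\otimes^{p}}}$ is well defined.

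Next, I would verify the identity $\mathbf{P}^{\bot}_{\mathbf{S}^{\otimes^{p}}}\,(\mathbf{P}^{\bot}_{\mathbf{S}})^{\otimes^{p}} = (\mathbf{P}^{\bot}_{\mathbf{S}})^{\otimes^{p}}$. Using the factorization just obtained, $\mathbf{P}^{\bot}_{\mathbf{S}^{\otimes^{p}}} = \mathbf{I}^{\otimes^{p}} - (\mathbf{P}_{\mathbf{S}})^{\otimes^{p}}$, so the left-hand side splits into two terms. The first is $(\mathbf{P}^{\bot}_{\mathbf{S}})^{\otimes^{p}}$ itself, while the second is $(\mathbf{P}_{\mathbf{S}})^{\otimes^{p}}(\mathbf{P}^{\bot}_{\mathbf{S}})^{\otimes^{p}}$, which by repeated use of (\ref{kp2}) equals $(\mathbf{P}_{\mathbf{S}}\mathbf{P}^{\bot}_{\mathbf{S}})^{\otimes^{p}}$. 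Since $\mathbf{P}_{\mathbf{S}}$ and $\mathbf{P}^{\bot}_{\mathbf{S}}$ project onto mutually orthogonal complementary subspaces, $\mathbf{P}_{\mathbf{S}}\mathbf{P}^{\bot}_{\mathbf{S}} = \mathbf{0}$, so the second term vanishes and the identity holds.

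Finally, the inclusion follows at once: for any conformable matrices $\mathrm{R}(\mathbf{A}\mathbf{B}) \subseteq \mathrm{R}(\mathbf{A})$, and setting $\mathbf{A} = \mathbf{P}^{\bot}_{\mathbf{S}^{\otimes^{p}}}$ and $\mathbf{B} = (\mathbf{P}^{\bot}_{\mathbf{S}})^{\otimes^{p}}$ the identity of the previous paragraph reads $\mathbf{A}\mathbf{B} = \mathbf{B}$, whence $\mathrm{R}((\mathbf{P}^{\bot}_{\mathbf{S}})^{\otimes^{p}}) = \mathrm{R}(\mathbf{A}\mathbf{B}) \subseteq \mathrm{R}(\mathbf{P}^{\bot}_{\mathbf{S}^{\otimes^{p}}})$, which is exactly (\ref{lemmacontent}). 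Equivalently, the identity says every column of $(\mathbf{P}^{\bot}_{\mathbf{S}})^{\otimes^{p}}$ is a fixed point of the projector $\mathbf{P}^{\bot}_{\mathbf{S}^{\otimes^{p}}}$ and therefore lies in its range.

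I expect the main obstacle to be the first step, recognizing that the projector onto $\mathrm{R}(\mathbf{S}^{\otimes^{p}})$ factorizes as $(\mathbf{P}_{\mathbf{S}})^{\otimes^{p}}$, since this is the reformulation that converts an apparently geometric containment of tensor subspaces into a one-line algebraic computation; everything after it is routine. As a sanity check on why the containment is genuinely an inclusion rather than an equality, which is the whole point of the paper, I would observe at the subspace level that $\mathrm{R}((\mathbf{P}^{\bot}_{\mathbf{S}})^{\otimes^{p}})$ is spanned by pure tensors all of whose factors lie in $\mathrm{R}(\mathbf{S})^{\bot}$, whereas $\mathrm{R}(\mathbf{P}^{\bot}_{\mathbf{S}^{\otimes^{p}}})$ is the full orthogonal complement of $\mathrm{R}(\mathbf{S}^{\otimes^{p}})$ and additionally contains the mixed tensors having some factors in $\mathrm{R}(\mathbf{S})$; this confirms the enlargement of the search space claimed in the introduction.
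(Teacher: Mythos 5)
Your proposal is correct and follows essentially the same route as the paper's own proof: both establish the factorization $\mathbf{P}^{\bot}_{\mathbf S^{\otimes^{p}}}=\mathbf I^{\otimes^{p}}-(\mathbf S(\mathbf S^{\mathrm T}\mathbf S)^{-1}\mathbf S^{\mathrm T})^{\otimes^{p}}$ via the Kronecker-product identities and then verify the operator identity $\mathbf{P}^{\bot}_{\mathbf S^{\otimes^{p}}}(\mathbf{P}^{\bot}_{\mathbf S})^{\otimes^{p}}=(\mathbf{P}^{\bot}_{\mathbf S})^{\otimes^{p}}$ (your $(\mathbf P_{\mathbf S}\mathbf P^{\bot}_{\mathbf S})^{\otimes^{p}}=\mathbf 0$ is the same computation as the paper's $(\mathbf Q-\mathbf Q^{2})^{\otimes^{p}}=\mathbf 0$), from which the range inclusion follows. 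Your explicit justification of the final step via $\mathrm{R}(\mathbf A\mathbf B)\subseteq\mathrm{R}(\mathbf A)$ and your remark on why the inclusion is generally strict are slightly more careful than the paper's wording, but the argument is the same.
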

  %\setlength\multlinegap{3em}
   %\setlength\multlinetaggap{3em}
 %\begin{equation} \label{tensorkron}
 %\begin{split}
 %\shoveleft \rm vec( \mathcal S  \times_{1} \mathbf A^{(1)}  \times_{2} \mathbf A^{(2)}  \dots  \times_{N} \mathbf A^{(N)} ) =  \\
    %(\mathbf A^{(N)} \otimes  \dots \otimes  \mathbf A^{(N-1)}  \mathbf A^{(2)}  \otimes  \mathbf A^{(1)}    ) \rm vec(\mathcal S )
%\end{split}
%\end{equation}

 %\textbf{Proof}:
%\section{Proof}
\begin{proof}
%\IEEEproof
We start by defining
\begin{equation}\label{zjbjianhua}
  \mathbf {P}^{\bot}_{\mathbf S}=\mathbf I-\mathbf S (\mathbf S^{\mathrm T} \mathbf S)^{-1} \mathbf S^{\mathrm T}.
  \end{equation}

 Denote $ \mathbf Q =\mathbf S (\mathbf S^{\mathrm T} \mathbf S)^{-1} \mathbf S^{\mathrm T} $, and we have

  \begin{equation}\label{leftsimplicity}
  (\mathbf {P}^{\bot}_{\mathbf S} )^{\otimes ^{p}}  =(  \mathbf I-\mathbf Q ) ^{\otimes ^{p}}.
  \end{equation}

  Similarly,
  \begin{equation}\label{rightsimplicity1}
  \mathbf {P}^{\bot}_{\mathbf S ^{\otimes ^{p}} }=\mathbf I ^{\otimes ^{p}} - \mathbf S ^{\otimes ^{p}}        [  (\mathbf S ^{\otimes ^{p}} )^{\mathrm T} \mathbf S ^{\otimes ^{p}}  ]^{-1} (\mathbf S ^{\otimes ^{p}} )^{\mathrm T}.
  \end{equation}

  According to the property (\ref{kp1}) $\sim$  (\ref{kp2}), it can be derived that% and a  derivation similar to (\ref{simple1}) $\sim$  (\ref{simple3}),
\begin{equation}\label{rightsimplicity2}
\begin{split}
\mathbf {P}^{\bot}_{\mathbf S ^{\otimes ^{p}} }
&=\mathbf I ^{\otimes ^{p}} - \mathbf S ^{\otimes ^{p}}   [  (\mathbf S^{\mathrm T})^{\otimes ^{p}} \mathbf S ^{\otimes ^{p}}  ]^{-1} (\mathbf S^{\mathrm T})^{\otimes ^{p}}                \\
&=\mathbf I ^{\otimes ^{p}}  -[\mathbf S (\mathbf S^{\mathrm T} \mathbf S)^{-1} \mathbf S^{\mathrm T} ]^{\otimes ^{p}}   \\
  &=\mathbf I ^{\otimes ^{p}}  -\mathbf Q ^{\otimes ^{p}}.
  \end{split}
  \end{equation}

  Then
  \begin{equation}\label{leftright}
  \begin{split}
    \mathbf {P}^{\bot}_{\mathbf S ^{\otimes ^{p}} } (\mathbf {P}^{\bot}_{\mathbf S} )^{\otimes ^{p}}
    &=(\mathbf I ^{\otimes ^{p}}  -\mathbf Q ^{\otimes ^{p}} ) (  \mathbf I-\mathbf Q ) ^{\otimes ^{p}}           \\
    &=\mathbf I ^{\otimes ^{p}}(  \mathbf I-\mathbf Q ) ^{\otimes ^{p}}  - \mathbf Q ^{\otimes ^{p}}  (  \mathbf I-\mathbf Q ) ^{\otimes ^{p}}              \\
    &= (\mathbf I-\mathbf Q ) ^{\otimes ^{p}}     -   (  \mathbf Q-\mathbf Q^{2} ) ^{\otimes ^{p}}.
    \end{split}
    \end{equation}

 It can be easily verified that $ \mathbf Q=\mathbf Q^{2} $ since $ \mathbf Q$ is a projection matrix, and thus
    \begin{equation}\label{leftright2}
    \mathbf {P}^{\bot}_{\mathbf S ^{\otimes ^{p}} } (\mathbf {P}^{\bot}_{\mathbf S} )^{\otimes ^{p}}=  (\mathbf I-\mathbf Q ) ^{\otimes ^{p}}  =  (\mathbf {P}^{\bot}_{\mathbf S} )^{\otimes ^{p}}.
    \end{equation}

    (\ref{leftright2}) implies that the projection of the matrix  $(\mathbf {P}^{\bot}_{\mathbf S} )^{\otimes ^{p}} $ in the  space spanned by the columns of   $ \mathbf {P}^{\bot}_{\mathbf S ^{\otimes ^{p}} } $ is still itself. Then we can conclude that  (\ref{lemmacontent}) holds.

      %(\ref{leftright2}) can be interpreted as follows. Consider the space spanned by the columns of   $ \mathbf {P}^{\bot}_{\mathbf S ^{\otimes ^{p}} } $, (\ref{leftright2}) shows that the projection of the matrix  $(\mathbf {P}^{\bot}_{\mathbf S} )^{\otimes ^{p}} $ in this space is itself.
%
%      Since $\rm R(\mathbf A \mathbf B) \subseteq \rm R(\mathbf A )$ and then we can conclude that  (\ref{lemmacontent}) holds.
      %Similarly, we can have that
      % \begin{equation}\label{leftright3}
      % (\mathbf {P}^{\bot}_{\mathbf S} )^{\otimes ^{p}}  \mathbf {P}^{\bot}_{\mathbf S ^{\otimes ^{p}} } =  (\mathbf {P}^{\bot}_{\mathbf S} )^{\otimes ^{p}}
      %   \end{equation}

        % Combining (\ref{leftright2}) with (\ref{leftright3}), (\ref{lemmacontent})  also  holds.

  %\hfill\rule{3mm}{3mm}
  \end{proof}
   %Besides, we can prove that
 %\begin{equation}\label{leftright4}
  %\begin{split}
  %  \mathbf {P}^{\bot}_{\mathbf S ^{\otimes ^{p}} }    \mathbf Q ^{\otimes ^{p}}
   % &=(\mathbf I ^{\otimes ^{p}}  -\mathbf Q ^{\otimes ^{p}} ) \mathbf Q ^{\otimes ^{p}}         \\
   % &= \mathbf Q ^{\otimes ^{p}}  -(\mathbf Q^{2} )^{\otimes ^{p}}   \\
  %  &=0
   % \end{split}
   % \end{equation}
 % to show that there is no intersection between the sapce spanned by $  \mathbf Q ^{\otimes ^{p}}  $  and  that by  $ \mathbf {P}^{\bot}_{\mathbf S ^{\otimes ^{p}} }  $.

  Furthermore, we can obtain the dimensionality of the space spanned by $(\mathbf {P}^{\bot}_{\mathbf S} )^{\otimes ^{p}} $  and    $ \mathbf {P}^{\bot}_{\mathbf S ^{\otimes ^{p}} } $.
Theoretically, the rank of a matrix $ \mathbf S $ can be defined as the dimensionality of the range of $ \mathbf S $, which follows
    \begin{equation}\label{rankrange}
  \rm rank(\mathbf S)= \rm dim  [R(\mathbf S ) ].
  \end{equation}

 %  Meanwhile, since the rank of the Kronecker product satisfies\cite{zhang2017matrix}
%    \begin{equation}\label{KPrank}
%  \rm rank(\mathbf A \otimes  \mathbf B)=\rm rank(\mathbf A)\rm rank(\mathbf B)
%  \end{equation}
 According to (\ref{KPrank}), we can have
  \begin{equation}\label{complementKP}
 \rm dim[\rm R(  (\mathbf {P}^{\bot}_{\mathbf S} )^{\otimes ^{\it p}}    ) ] = \rm rank((\mathbf {P}^{\bot}_{\mathbf S})^{\otimes ^{\it p}}  )  = [\rm rank(\mathbf {P}^{\bot}_{\mathbf S}  )] ^{\it p}.
  \end{equation}

  A vector space $\rm V $ is the direct sum of the subspace $\rm W $ and its orthogonal complement space $ \rm W ^{\bot}$ and the dimensionality will satisfy the following relationship
  \begin{equation}\label{directsum}
  \rm dim(W)+\rm dim(W ^{\bot}) =\rm dim(V).
  \end{equation}

 Assume that subspace $\rm W $ is  spanned by the columns of matrix $\mathbf S $ and therefore
  \begin{equation}\label{ranksum}
  \rm rank(\mathbf S)+\rm rank(\mathbf S ^{\bot}) =n.
  \end{equation}

  Combining (\ref{ranksum}) with (\ref{complementKP}), we can obtain
  \begin{equation}\label{rankOfcomplementKP}
  \rm dim[ \rm R(  (\mathbf {P}^{\bot}_{\mathbf S} )^{\otimes ^{\it p}}    )   ]=(n-m)^{\it p}.
  \end{equation}

  In a similar way ,we can have
  \begin{equation}\label{rankOfKPcomplement}
 \rm dim[ \rm R(  \mathbf {P}^{\bot}_{\mathbf S ^{\otimes ^{\it p}} }  )  ]= n^{\it p}-m^{\it p}.
  \end{equation}
  %and
  % \begin{equation}\label{rankOfM}
 %\rm dim[ \rm R(  \mathbf M  )  ]= \sum\limits_{i=1}^{p-1}  C_{p}^{i} m ^{i}  (n-m )^{p-i}
 % \end{equation}

   Since $ n  \ge m > 0$ and $ p \in N^{+} $, according to the binomial theorem, the following  inequality can be deduced
  \begin{equation}\label{inequality}
 (n-m)^{p}  \le  n^{p}-m^{p},
  \end{equation}
  which is consistent with the conclusion  in  Lemma \ref{lemma1}.
%$\mathbf \Sigma$
%Now, we analyze two specific  cases for Lemma \ref{lemma1}.

%\subsubsection{NPSA and PSA case}

Now, we    reconsider   (\ref{updatedtensor})   and  (\ref{oldmethod}) in the  $ L^{3}$-dimensional  space, and  we can have
  \begin{equation}\label{npsa1d}
 \mathbf {P}^{\bot}_{\mathbf u ^{\otimes ^{3}} } \cdot \mathbf s  \in \rm R(  \mathbf {P}^{\bot}_{\mathbf u ^{\otimes ^{3}} }  )
  \end{equation}
 and
   \begin{equation}\label{psa1d}
(\mathbf {P}^{\bot}_{\mathbf u} )^{\otimes ^{3}} \cdot  \mathbf s  \in \rm R(  (\mathbf {P}^{\bot}_{\mathbf u} )^{\otimes ^{3}}    ),
  \end{equation}
where we utilize the property (\ref{tensorkron}).
%\begin{equation} \label{tensorkron}
% \begin{split}
% \shoveleft \rm vec( \mathcal S \times_1 \mathbf A^{(1)}  \times_2 \mathbf A^{(2)} \dots  \times_n \mathbf A^{(n)}  ) =  \\
%    (\mathbf A^{(n)}   \otimes \mathbf A^{(n-1)} \otimes \dots \otimes \mathbf A^{(1)} )^{\mathrm T} \rm vec( \mathcal S )
%\end{split}
%\end{equation}

 % (\ref{npsa1d}) corressponds to the step (1) $\sim $ step (4) described in section III.
  Based on Lemma \ref{lemma1}, it always holds that   $\rm R(  (\mathbf {P}^{\bot}_{\mathbf u} )^{\otimes ^{3}} )  \subseteq  \rm R(  \mathbf {P}^{\bot}_{\mathbf u ^{\otimes ^{3}} }  ) $. This  implies that  the  strategy of NPSA can  enlarge the search space of the eigenvector to be determined  in each unit, instead of being restricted in the orthogonal complement space of the previous one  as in  PSA.   Meanwhile, similar to PSA, NPSA  can also  prevent the  solution from converging to the same one that has been determined before becaues of  the  use of the orthogonal complement operator in the  $ L^{3}$-dimensional space  given by  (\ref{step3}). %Recall the comparison in the beginning of this section, it implies that the order of performing the mode-$n$ and the orthogonal complement operation
 %, similar to equation (\ref{leftright4}), we can have that  $ \mathbf {P}^{\bot}_{\mathbf u ^{\otimes ^{p}} }    (\mathbf u \mathbf u^{T})^{\otimes ^{p}}=0$, which means that the space spanned by $  (\mathbf u \mathbf u^{T})^{\otimes ^{p}} $  is not included in the new search space, and

 %Above all, the new method  does  obtain  more accurate solutions.
 %reconsider the case by generalizing equation (\ref{updatedtensor}) and equation (\ref{oldmethod}) into one dimensional vector space, which can be rewritten as

\section{Experiments}
In this section, a series of experiments on  both simulated  and real multi/hyperspectral data  are conducted to evaluate the performance of   NPSA,   and several widely used  algorithms are compared meanwhile. All the algorithms are programmed and implemented in MATLAB R$2016$b on a laptop of $8$ GB RAM, Inter(R) Core (TM) i$5$-$4210$U CPU, @$1.70$GHZ.% and all the tests are conducted by the same computer with the same software environment.

\subsection{Experiments On Blind Image Separation}
To evaluate  the separation performance of NPSA, we apply it to the blind image separation (BIS) problem. BIS is an important application for ICA, and four algorithms designed for this problem are compared with our  method, including the original PSA \cite{PSA}, EcoICA (a third-order version of JADE)\cite{EcoICA}, subspace fitting method (SSF)\cite{SSF} and Distinctively
Normalized Joint Diagonalization (DNJD)\cite{DNJD} .

The aim of BIS is to estimate the mixing matrix, denoted by $\mathbf B$ (or its inverse matrix, i.e., the demixing matrix, denoted by $\mathbf U$) when only the mixed data is known. Here, three  gray-scale images with a size of  $ 256 \times 256 $ are selected as the source images, as shown in the first column in Fig~\ref{bisresult}. The mixing matrix $\mathbf B$ can be  generated by  the $\rm rand $ function in MATLAB software,  and  the mixed images are shown in the second column in Fig~\ref{bisresult}.  Then we apply these different algorithms to estimate the mixing (or the  demixing)   matrix and to obtain the separated images. The results  are shown in Fig~\ref{bisresult}. In order to ensure the reliability of the conclusion, we also conduct  the other four  combinations, in each of which we randomly select three different source images from each other.  Finally, several indices are further computed to evaluate  their  accuracy  as follows.

%In EcoICA, SSF and DNJD, target matrices are constructed to form the objective function. In fact, the frontal slices of the coskewness tensor  can be selected as  the  target matrices set\cite{EcoICA}.

 %Here, three  gray-scale images of size $ 256 \times 256 $ are selected as the source images. In order to ensure comparison accuracy, five different combinations of three images are tested in the experiment. Then, the mixing matrix $\mathbf B$  is generated by  the $\rm rand() $ function in MATLAB software and we can obtain the mixing data. The aim of BIS is to estimate the mixing matrix (or its inverse matrix) when only mixed data is known. We apply these algorithms to the mixing data to  estimate the matrix $\mathbf B$ or its inverse $\mathbf U$. Finally, two indices are computed to evaluate and compare the separation performance. The two indices are described as follows.
\subsubsection{ intersymbol-interference ($\rm ISI$)}
  This index  \cite{ISI}  is to measure the separation performance. After estimating the demixing matrix $\mathbf U$ (and the  mixing matrix  $\mathbf B$ is known), let $\mathbf P=\mathbf U\mathbf B$  , the ISI index is defined as
\begin{equation}\label{e20}
\rm ISI=\sum_{i=1}^L(\sum_{j=1}^L \frac {\vert \mathbf P_{ij}\vert^2 } {\max_k \vert \mathbf P_{ik}\vert^2}-1)+\sum_{j=1}^L(\sum_{i=1}^L \frac {\vert \mathbf P_{ij}\vert^2 } {\max_k \vert \mathbf P_{kj}\vert^2}-1).
\end{equation}
%  where $\max_k \vert \mathbf P_{ik}\vert^2 $ and $ \max_k \vert \mathbf P_{kj}\vert^2$ denotes

It is obvious that if $ \mathbf U=\mathbf B^{-1}$, $ \mathbf P$ is an identity matrix, so the ISI is equal to zero. The smaller the ISI is, the better the algorithm performs. We take the average value of 10
runs as the result.  The comparison between these  methods is listed in Table \ref{ISI}.

 As can be seen, NPSA performs better than the others in combination 1, 2, 4 and 5, especially in combination 5. EcoICA is slightly superior  to NPSA in combination 3.
 %Other algorithms obtain a larger ISI value.

\begin{figure*}[!htbp]
\begin{subfigure}[htbp!]{0.13\textwidth}
  \centering
  \subcaption*{source image}
  \includegraphics[width=1.0\textwidth]{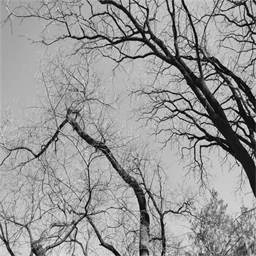}
  \label{r1}
\end{subfigure}
\hfill
\begin{subfigure}[htbp!]{0.13\textwidth}
  \centering
  \subcaption*{mixed image}
  \includegraphics[width=1.0\textwidth]{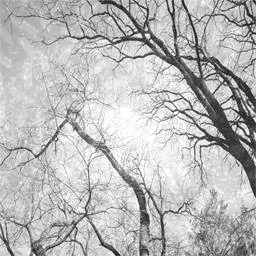}
  \label{m1}
\end{subfigure}
\hfill
\begin{subfigure}[htbp!]{0.13\textwidth}
  \centering
  \subcaption*{EcoICA}
  \includegraphics[width=1.0\textwidth]{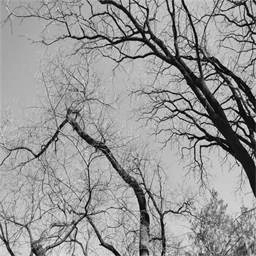}
 \label{ecobis1}
\end{subfigure}
\hfill
\begin{subfigure}[htbp!]{0.13\textwidth}
  \centering
   \subcaption*{DNJD}
  \includegraphics[width=1.0\textwidth]{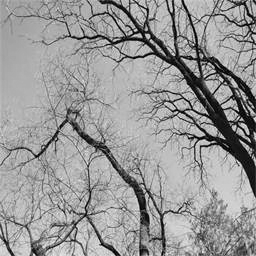}
 \label{dnjdbis1}
\end{subfigure}
\hfill
\begin{subfigure}[htbp!]{0.13\textwidth}
  \centering
  \subcaption*{SSF}
  \includegraphics[width=1.0\textwidth]{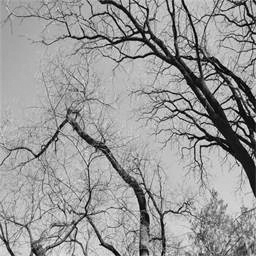}
    \label{ssfbis1}
\end{subfigure}
\hfill
\begin{subfigure}[htbp!]{0.13\textwidth}
  \centering
  \subcaption*{PSA}
  \includegraphics[width=1.0\textwidth]{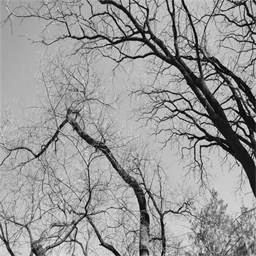}
  \label{psabis1}
\end{subfigure}
\hfill
\begin{subfigure}[htbp!]{0.13\textwidth}
  \centering
  \subcaption*{NPSA}
  \includegraphics[width=1.0\textwidth]{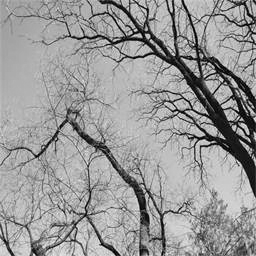}
  \label{npsabis1}
\end{subfigure}
\hfill

\begin{subfigure}[htbp!]{0.13\textwidth}
  \centering
  \includegraphics[width=1.0\textwidth]{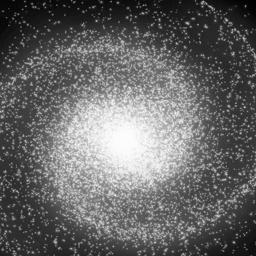}
\end{subfigure}
\hfill
\begin{subfigure}[htbp!]{0.13\textwidth}
  \centering
  \includegraphics[width=1.0\textwidth]{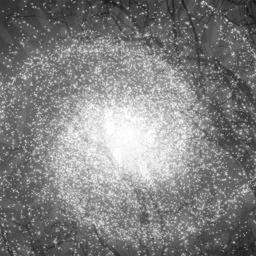}
\end{subfigure}
\hfill
\begin{subfigure}[htbp!]{0.13\textwidth}
  \centering
  \includegraphics[width=1.0\textwidth]{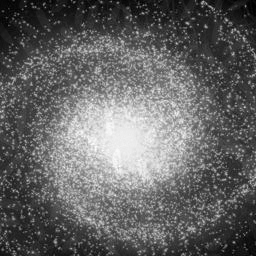}
\end{subfigure}
\hfill
\begin{subfigure}[htbp!]{0.13\textwidth}
  \centering
  \includegraphics[width=1.0\textwidth]{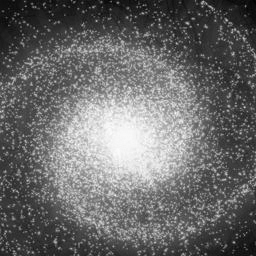}
\end{subfigure}
\hfill
\begin{subfigure}[htbp!]{0.13\textwidth}
  \centering
  \includegraphics[width=1.0\textwidth]{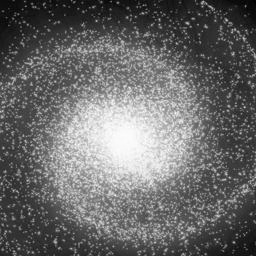}
\end{subfigure}
\hfill
\begin{subfigure}[htbp!]{0.13\textwidth}
  \centering
  \includegraphics[width=1.0\textwidth]{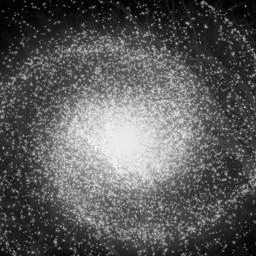}
\end{subfigure}
\hfill
\begin{subfigure}[htbp!]{0.13\textwidth}
  \centering
  \includegraphics[width=1.0\textwidth]{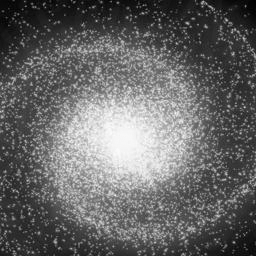}
\end{subfigure}
\hfill

\begin{subfigure}[htbp!]{0.13\textwidth}
  \centering
  \subcaption*{}
  \includegraphics[width=1.0\textwidth]{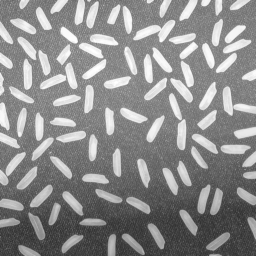}
\end{subfigure}
\hfill
\begin{subfigure}[htbp!]{0.13\textwidth}
  \centering
  \subcaption*{}
  \includegraphics[width=1.0\textwidth]{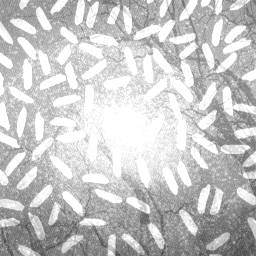}
\end{subfigure}
\hfill
\begin{subfigure}[htbp!]{0.13\textwidth}
  \centering
  \subcaption*{}
  \includegraphics[width=1.0\textwidth]{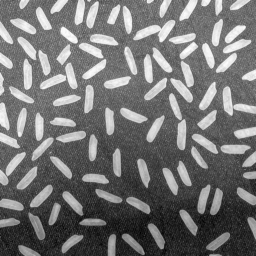}
\end{subfigure}
\hfill
\begin{subfigure}[htbp!]{0.13\textwidth}
  \centering
  \subcaption*{}
  \includegraphics[width=1.0\textwidth]{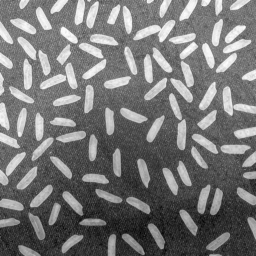}
\end{subfigure}
\hfill
\begin{subfigure}[htbp!]{0.13\textwidth}
  \centering
  \subcaption*{}
  \includegraphics[width=1.0\textwidth]{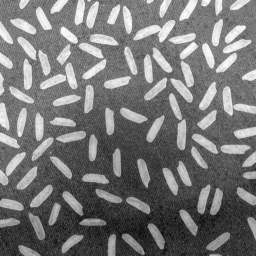}
\end{subfigure}
\hfill
\begin{subfigure}[htbp!]{0.13\textwidth}
  \centering
  \subcaption*{}
  \includegraphics[width=1.0\textwidth]{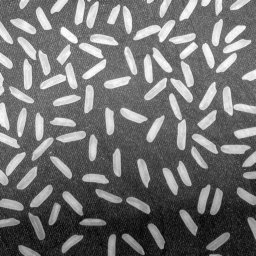}
\end{subfigure}
\hfill
\begin{subfigure}[htbp!]{0.13\textwidth}
  \centering
   \subcaption*{}
  \includegraphics[width=1.0\textwidth]{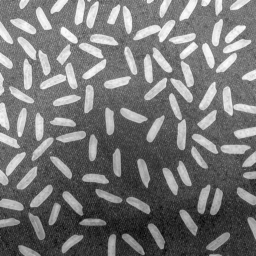}
\end{subfigure}
\hfill
\caption{\quad  %From left to right:
The results of NPSA, PSA, EcoICA, DNJD, and SSF. The first and second  column are the three source images and randomly-mixed images as the reference, respectively. }
%(a)(b)(c) shows three of independent components of
\label{bisresult}
\end{figure*}

\begin{table}[t]
   \normalsize
    %\large
    \centering
    \caption{ \\ Comparison of EcoICA, DNJD, SSF, PSA and NPSA for the ISI index  of five different combinations. An average result of ten runs is computed.}
\setlength{\tabcolsep}{1pt}
\renewcommand\arraystretch{1.3}
    %\newcolumntype{Y}{>{\centering\arraybackslash}X}
    %\begin{tabularx}{\textwidth}{  c| Y Y Y Y Y }
    \begin{tabular}{  c  c  c  c  c c }
		\hline
		Combination  & EcoICA & DNJD & SSF & PSA &NPSA   \\
		\hline
		1	& 0.0043 	& 0.2375 	&	0.1123  &  0.0061  & $\bf 0.0031 $ \\
        2   & 0.0309	& 0.3861    &  	0.0656  &  0.0363  & $\bf  0.0193 $  \\
        3 	&$\bf  0.0122 $  & 0.1353    & 	0.0179 &           0.0186   & 0.0151 	\\
        4  &0.0309	& 0.1119    &   0.1238   &	 0.0335 & $\bf  0.0196 $     \\
        5  &   0.0027  & 0.l923& 	0.0458  &   0.0035  &	\bf 0.0004     \\    % $ \bf  4.4525\times e^{-04}$
		  \hline
    \end{tabular}
      \label{ISI}
	%\end{tabularx}
\end{table}

   \begin{table*}[!htbp]
  %  \small
    \centering

    \caption{ \\ Comparison of EcoICA, DNJD, SSF, PSA and NPSA for the TMSE index  of five different combinations. An average result of ten runs is computed.}
\setlength{\tabcolsep}{9pt}
    \renewcommand\arraystretch{1.5}
    %\newcolumntype{Y}{>{\centering\arraybackslash}X}
    %\begin{tabularx}{\textwidth}{  c Y Y Y Y Y }
    \begin{tabular}{  c  c  c  c  c c }
		\hline
		Combination    & EcoICA & DNJD & SSF & PSA &NPSA  \\
		\hline
		 1 &    $ 2.6416\cdot 10^{-15}$ &	$2.5130 \cdot 10^{-11}$ &	$2.6011 \cdot 10^{-16}$ &  $6.7102 \cdot 10^{-15}$  & 	    $\bf 2.4090\cdot 10^{-16}$ \\
          2 &     $ 6.2017\cdot  10^{-13}$ &	$3.0427\cdot 10^{-11}$ &	$\bf 1.5112\cdot 10^{-15}$ &  $1.0104\cdot 10^{-12}$  & 	    $1.5827\cdot 10^{-15}$ \\
           3 &    $2.3501\cdot 10^{-14}$ &	$3.4753\cdot 10^{-11}$ &	$1.1480\cdot 10^{-12}$ &  $3.4730\cdot 10^{-14} $ & 	       $\bf  1.9547\cdot 10^{-14}$ \\
            4 &     $ 6.2017\cdot 10^{-13}$ &	$6.3559\cdot 10^{-12}$ &	$1.5100\cdot 10^{-15}$ &  $1.0025\cdot 10^{-12}$  & 	    $\bf 1.4332\cdot 10^{-15}$ \\
             5 &     $  2.4311\cdot 10^{-15}$ &	$1.2730\cdot 10^{-10}$ &	$6.2264\cdot 10^{-17}$ &  $8.6409\cdot 10^{-17} $ & 	    $\bf 6.1932\cdot              10^{-19}$ \\
        \hline
	\end{tabular}
 \label{TMSE}
\end{table*}

\subsubsection{ Total mean square error ($\rm TMSE $)}

\begin{table}[h]
    \centering
    \caption{  \\ Comparison of EcoICA, DNJD, SSF, PSA and NPSA for the correlation coefficient index   of five different combinations. An average result of ten runs is computed. }
\setlength{\tabcolsep}{6pt}  %%%可以设置列宽
\renewcommand\arraystretch{1.7}
    %\newcolumntype{Y}{>{\centering\arraybackslash}X}
    %\begin{tabularx}{\textwidth}{  c| Y Y Y Y Y }
    \begin{tabular}{c  c  c  c  c c  }  %  %*{7}c
    %\tabcolsep{1em}
		\hline
		Combination    & EcoICA & DNJD & SSF & PSA & NPSA  \\
		\hline
%\setcellgapes{1em}
\multirowcell{3}{1} &0.9972 & 0.9230  & 0.9978 &	0.9969  & \bf {0.9997}    \\
  & 	\bf {0.9992}  &  	0.8596 &	0.9944 &\bf {0.9992}  &	\bf {0.9992}    \\
     &	\bf {1.0000}  &	0.9637  & 	0.9969 &	0.9998    &\bf { 1.0000 }  \\
        \hline
        %\cline{2-2}
        \multirowcell{3}{2}    &	\bf {1.0000	}& 0.9611	& 0.9963    &	0.9990           & 0.9985  \\
 		& 0.9553 & 0.8848  & 	0.9872 & 0.9580   & \bf {0.9988}  \\
 	& 0.9988 & 	0.9517 & 	\bf {0.9989 }  & 0.9986  &\bf {0.9989}  \\
        \hline

        \multirowcell{3}{3}  	  &	0.9916 & 	0.9451 & 	0.9909   &  0.9902  & \bf {0.9920}    \\
	& \bf {1.0000}	& 0.9760  & 	0.9997  &  	0.9999  & \bf {1.0000 }  \\
 	 &  0.9977	 & 0.9252	  & 0.9871   &  0.9972	  & \bf {0.9995}         \\
        \hline

        \multirowcell{3}{4}  	 &	0.9988&	0.9623	&0.9972 &0.9987   &\bf {0.9989} \\
	&	0.9553&	0.8300&	0.9632     &0.9611    &\bf {0.9989}  \\
		&\bf {1.0000}	&0.9466	& 0.9935  &0.9992       &0.9983  \\
        \hline

        \multirowcell{3}{5}  		&\bf {1.0000} &	0.6621	&  0.9616 & 0.9998  & \bf {1.0000} \\
	&	\bf {1.0000}&	0.9906&	 0.9922   &0.9998   &\bf {1.0000}  \\
		&0.9972&	0.9156	&0.9968  &0.9966      &\bf {0.9997}     \\
        \hline
    \end{tabular}
      \label{SVC}
	%\end{tabularx}
\end{table}

 \begin{figure}[t]
  \centering

  \includegraphics[width=0.25\textwidth]{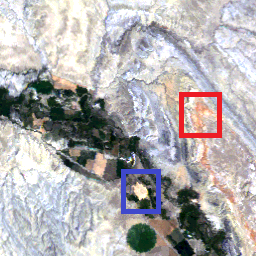}  \\
  \caption{\quad True color image of the multispectral data.(R:band $3$, G:band $2$, B:band $1$). }
  \label{msi}
\end{figure}

  To compute the error  between the source image $\mathbf I$  and the separated image $ \hat {\mathbf I}$, we use the mean square error (MSE) index. The images are first normalized into one length to eliminate the influence of the magnitude of the images, which can be implemented by $\mathbf I= \mathbf I / \Vert \mathbf I \Vert_{\mathrm F} $.  Then, assume that the number of the pixels is $ N $, the  MSE index can be calculated by
\begin{equation}\label{e21}
\rm MSE=\frac {1}{N} \Vert  \mathbf I   -\hat {\mathbf I} \Vert_{F}^{2},
\end{equation}
after computing the respective error of the  three images, we then compute the total mean square error (TMSE)
\begin{equation}\label{e22}
\rm TMSE=\frac {1}{3} \sum_{\it{i}=1}^3 \rm MSE_{ \it{i} }^{2}.
\end{equation}
	
  Similar to the ISI index, the smaller the TMSE is, the better the algorithm performs. We take the average value of 10
runs as the result.  The comparison between these methods is listed in Table \ref{TMSE}.

 The NPSA algorithm has the smallest TMSE in combination 1, 2, 4 and 5, while SSF is slightly  better than  NPSA in combination 3.
 %Note that the abnormal data in DNJD indicates its unstable performance in BIS.
\subsubsection{correlation coefficient}

Both ISI and TMSE mainly focus on the overall performance. Here, we use the correlation coefficient to  measure the similarity between  $ \mathbf I$ and  $ \hat {\mathbf I}$, in detail.   The two images are first vectorized into the vector $ \mathbf i $ and  $ \hat { \mathbf i} $ (Note that the normalization operation used in TMSE index have no impact on the final result since the correlation coefficient mainly measure the angle between the two vectors). Then the  correlation coefficient can be calculated by
\begin{equation}\label{e23}
\rho=\frac {\mathbf i  \cdot \hat { \mathbf i}}  {\Vert \mathbf i \Vert \cdot  \Vert \hat { \mathbf i} \Vert  }.
\end{equation}

   The results of the five combinations are listed in  Table \ref{SVC}. It can be found that the correlation coefficient  of   NPSA  is higher than that of the others in most of the  results and has a more stable performance meanwhile.

 Eventually, combining the comparison results of several indices, we can conclude that the proposed algorithm obtains a more accurate and  robust  performance in terms of BIS application.
 %Besides, since the source images are konwn, we can calculate the skewness values of the separated images by different methods and compare them with the true value of the source images. Here, we compute the absolute skewness value error (ASVE). Given two images $ \mathbf I_1$,$ \mathbf I_2$, the ASVE index can be calculated as follows
%  \begin{equation}\label{e23}
%\rm ASVE= \vert   \rm skew (\mathbf I_1)  -\rm skew (\mathbf I_2)        \vert
%\end{equation}
%where the $ \rm skew (\mathbf I_1)$ and  $\rm skew (\mathbf I_2)$ represents the skewness of the  two images, which can be implemented by the $\rm skewness $ function in the MATLAB software.

\subsection{Experiments On Multispectral Data}

% 全版面六福图展示

In this experiment, a $30$-m resolution Landsat-$5$ image embedded in the Environment for Visualizing Images  (ENVI) software is selected to evaluate the performance of NPSA. Three algorithms, including  the classical FastICA, PSA and EcoICA, are compared in the following experiments. The other two algorithms compared in the BIS experiments are not  included  since they have either an unstable performance or   an unbearable computational complexity  for real multi/hyperspectral data.

The dataset contains six
bands in $30$-m resolution, with band numbers $1$-$5$ and  $7$
(ranging from $0.482$ to $2.2$ um). Band $6 $ is a thermal band
($10.40$-$12.50$ um) with a much lower spatial resolution ($120$ m).
Thus, it is usually not included.  A subscene with a $256 \times 256$ pixel size is selected as the test data and  the true color image is shown in Fig.~\ref{msi}. In this area, the main land cover types include vegetation, water, bareland, etc.
%Meanwhile, based on the spectral interpretation, we mainly focus on two specific classes of the bareland. One is the cultivated  farmland (e,g,. framed in the blue rectangular) and the other is the uncultivated bareland (e,g,. framed in the red rectangular), and their spectral curves are  shown in Fig.~\ref{msispec}.
%Their spectrum are similar % and impervious surface.
%Meanwhile, based on the spectral interpretation  as shown in Fig.~\ref{msispec} , it can be found that the bareland can be classified into two types in detail, i,e,. the farmland (e,g,. framed in the blue rectangular) and  the uncultivated bareland (e,g,. framed in the red rectangular). %Their spectrum are similar % and impervious surface.
 %As aforementioned, we expect that our algorithm can search for the skewness directions more accurately, which means that these land-cover are more likely to be extracted as the independent components.

In our experiment, we set the number of independent components to $p=6$, i.e., we select the full-band image to conduct the test. Eventually, we can obtain $6$ principal skewness components of the image  and the results are shown in Fig.~\ref{msiresult}.  IC$1$-$3$ are almost the  same, which  correspond to three main land cover types: vegetation, bareland and water, respectively. By inspection, the result of NPSA in  Fig.~\ref{npsa_msi5} (IC$2$) and Fig.~\ref{npsa_msi6} (IC$6$) is  also superior to that of the other algorithms.

It is  worth taking  IC$2$ and IC$6$ as an example to compare the differences of these algorithms once again.  IC$2$ mainly corresponds to  the cultivated  farmland (e.g., framed in the blue rectangular), while  IC$6$  corresponds to  the uncultivated bareland (e.g., framed in the red rectangular).  Their spectral curves are  shown in Fig.~\ref{msispec}.
 Because of the orthogonal constraint  in PSA, EcoICA and FastICA, for the objects with similar spectra,  there is often only one of them   being highlighted. When the farmland is extracted as the independent component  in IC$2$, the uncultivated bareland will be suppressed in the later component, as shown in Fig.~\ref{fastica_msi6}, Fig.~\ref{eco_msi6} and Fig.~\ref{psa_msi6}.
 While in NPSA, because the restriction of orthogonal constraint is relaxed,  the object that is spectrally similar to the previous components    can still be  likely to be detected.   As shown in Fig.~\ref{npsa_msi6}, the uncultivated bareland   can  still be efficiently extracted  in NPSA.

 \begin{figure*}[t]
\begin{subfigure}[htbp!]{0.15\textwidth}
  \centering
  \includegraphics[width=1.0\textwidth]{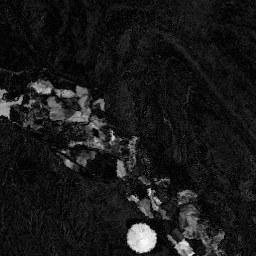}
  \subcaption{IC1 of FastICA}
  \label{fastica_msi1}
%  \captionsetup{labelformat=empty,font=small}
%  \caption{IC1 of FastICA}
\end{subfigure}
\hfill
\begin{subfigure}[htbp!]{0.15\textwidth}
    %\ContinuedFloat
  \centering
  \includegraphics[width=1.0\textwidth]{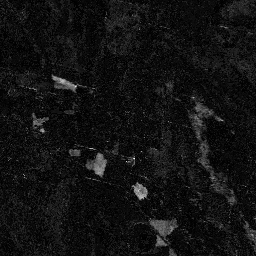}
   \subcaption{IC2 of FastICA}
  \label{fastica_msi2}
\end{subfigure}
\hfill
\begin{subfigure}[htbp!]{0.15\textwidth}
  \centering
  \includegraphics[width=1.0\textwidth]{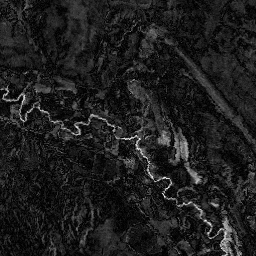}
   \subcaption{IC3 of FastICA}
  \label{fastica_msi3}
\end{subfigure}
\hfill
\begin{subfigure}[htbp!]{0.15\textwidth}
  \centering
  \includegraphics[width=1.0\textwidth]{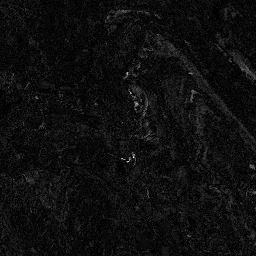}
   \subcaption{IC4 of FastICA}
  \label{fastica_msi4}
\end{subfigure}
\hfill
\begin{subfigure}[htbp!]{0.15\textwidth}
  \centering
  \includegraphics[width=1.0\textwidth]{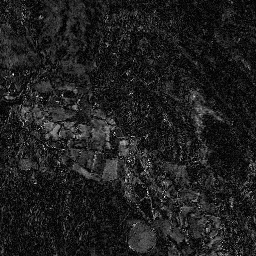}
   \subcaption{IC5 of FastICA}
  \label{fastica_msi5}
\end{subfigure}
\hfill
\begin{subfigure}[htbp!]{0.15\textwidth}
  \centering
  \includegraphics[width=1.0\textwidth]{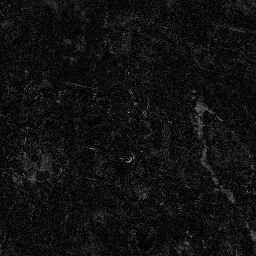}
   \subcaption{IC6 of FastICA}
  \label{fastica_msi6}
\end{subfigure}

\begin{subfigure}[htbp!]{0.15\textwidth}
  \centering
  \includegraphics[width=1.0\textwidth]{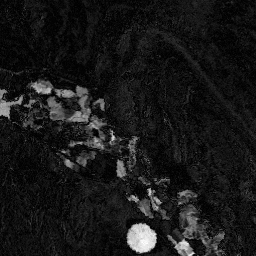}
   \subcaption{IC1 of EcoICA}
  \label{eco_msi1}
\end{subfigure}
\hfill
\begin{subfigure}[htbp!]{0.15\textwidth}
  \centering
  \includegraphics[width=1.0\textwidth]{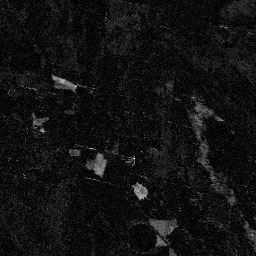}
   \subcaption{IC2 of EcoICA}
  \label{eco_msi2}
\end{subfigure}
\hfill
\begin{subfigure}[htbp!]{0.15\textwidth}
  \centering
  \includegraphics[width=1.0\textwidth]{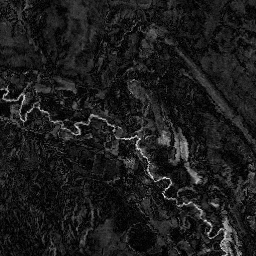}
   \subcaption{IC3 of EcoICA}
  \label{eco_msi3}
\end{subfigure}
\hfill
\begin{subfigure}[htbp!]{0.15\textwidth}
  \centering
  \includegraphics[width=1.0\textwidth]{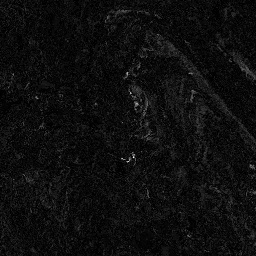}
   \subcaption{IC4 of EcoICA}
  \label{eco_msi4}
\end{subfigure}
\hfill
\begin{subfigure}[htbp!]{0.15\textwidth}
  \centering
  \includegraphics[width=1.0\textwidth]{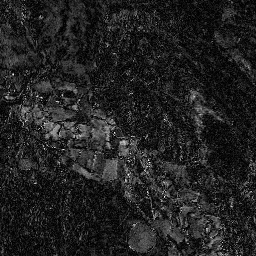}
   \subcaption{IC5 of EcoICA}
  \label{eco_msi5}
\end{subfigure}
\hfill
\begin{subfigure}[htbp!]{0.15\textwidth}
  \centering
  \includegraphics[width=1.0\textwidth]{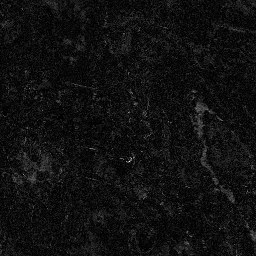}
   \subcaption{IC6 of EcoICA}
  \label{eco_msi6}
\end{subfigure}

\begin{subfigure}[htbp!]{0.15\textwidth}
  \centering
  \includegraphics[width=1.0\textwidth]{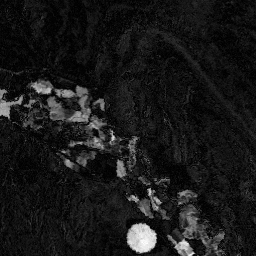}
   \subcaption{IC1 of PSA}
 \label{psa_msi1}
\end{subfigure}
\hfill
\begin{subfigure}[htbp!]{0.15\textwidth}
  \centering
  \includegraphics[width=1.0\textwidth]{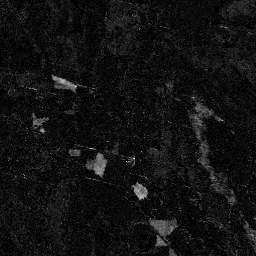}
   \subcaption{IC2 of PSA}
  \label{psa_msi2}
\end{subfigure}
\hfill
\begin{subfigure}[htbp!]{0.15\textwidth}
  \centering
  \includegraphics[width=1.0\textwidth]{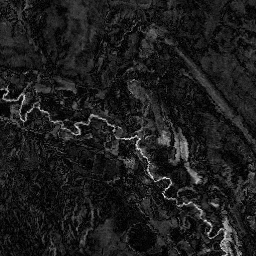}
   \subcaption{IC3 of PSA}
  \label{psa_msi3}
\end{subfigure}
\hfill
\begin{subfigure}[htbp!]{0.15\textwidth}
  \centering
  \includegraphics[width=1.0\textwidth]{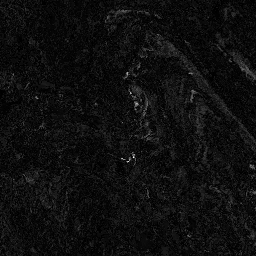}
   \subcaption{IC4 of PSA}
  \label{psa_msi4}
\end{subfigure}
\hfill
\begin{subfigure}[htbp!]{0.15\textwidth}
  \centering
  \includegraphics[width=1.0\textwidth]{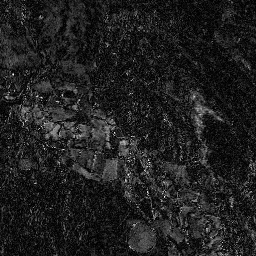}
   \subcaption{IC5 of PSA}
  \label{psa_msi5}
\end{subfigure}
\hfill
\begin{subfigure}[htbp!]{0.15\textwidth}
  \centering
  \includegraphics[width=1.0\textwidth]{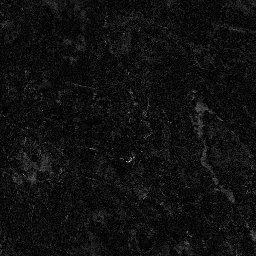}
   \subcaption{IC6 of PSA}
  \label{psa_msi6}
  %[width=\textwidth]
\end{subfigure}
%\center{PSA}

\begin{subfigure}[htbp!]{0.15\textwidth}
  \centering
  \includegraphics[width=1.0\textwidth]{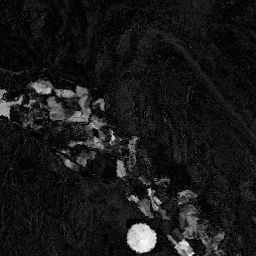}
   \subcaption{IC1 of NPSA}
  \label{npsa_msi1}
\end{subfigure}
\hfill
\begin{subfigure}[htbp!]{0.15\textwidth}
  \centering
  \includegraphics[width=1.0\textwidth]{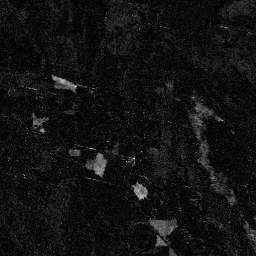}
   \subcaption{IC2 of NPSA}
  \label{npsa_msi2}
\end{subfigure}
\hfill
\begin{subfigure}[htbp!]{0.15\textwidth}
  \centering
  \includegraphics[width=1.0\textwidth]{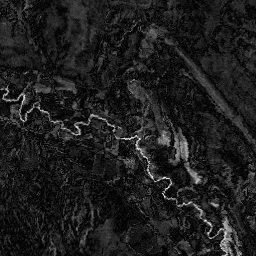}
   \subcaption{IC3 of NPSA}
  \label{npsa_msi3}
\end{subfigure}
\hfill
\begin{subfigure}[htbp!]{0.15\textwidth}
  \centering
  \includegraphics[width=1.0\textwidth]{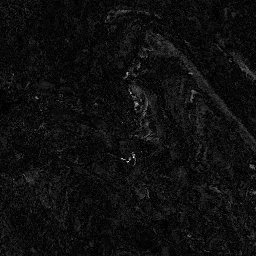}
   \subcaption{IC4 of NPSA}
  \label{npsa_msi4}
\end{subfigure}
\hfill
\begin{subfigure}[htbp!]{0.15\textwidth}
  \centering
  \includegraphics[width=1.0\textwidth]{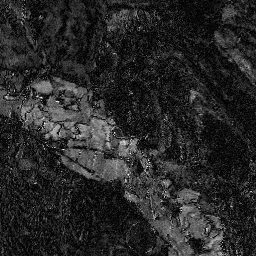}
   \subcaption{IC5 of NPSA}
  \label{npsa_msi5}
\end{subfigure}
\hfill
\begin{subfigure}[htbp!]{0.15\textwidth}
  \centering
  \includegraphics[width=1.0\textwidth]{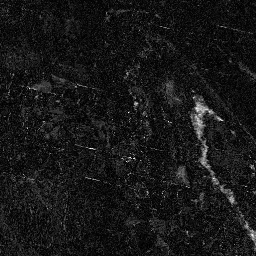}
   \subcaption{IC6 of NPSA}
  \label{npsa_msi6}
\end{subfigure}
\caption{\quad The results  of FastICA, EcoICA, PSA and NPSA for the multispectral image.  }
%(a)(b)(c) shows three of independent components of
\label{msiresult}
\end{figure*}

To further demonstrate the advantage of NPSA, we conduct a quantitative comparison by calculating  the skewness value of each independent component,  and the results are plotted in Fig.~\ref{msivalue}.  As can be seen, the skewness value of NPSA in  each independent component is equal to or larger than that of the others, which implies that NPSA can obtain the more reasonable maxima  in each unit and has a better overall performance.
%that the result of NPSA is more likely to be the independent component of the image and has a better performance in feature extraction.

\begin{figure}[t]
  \centering
  \includegraphics[width=0.3\textwidth]{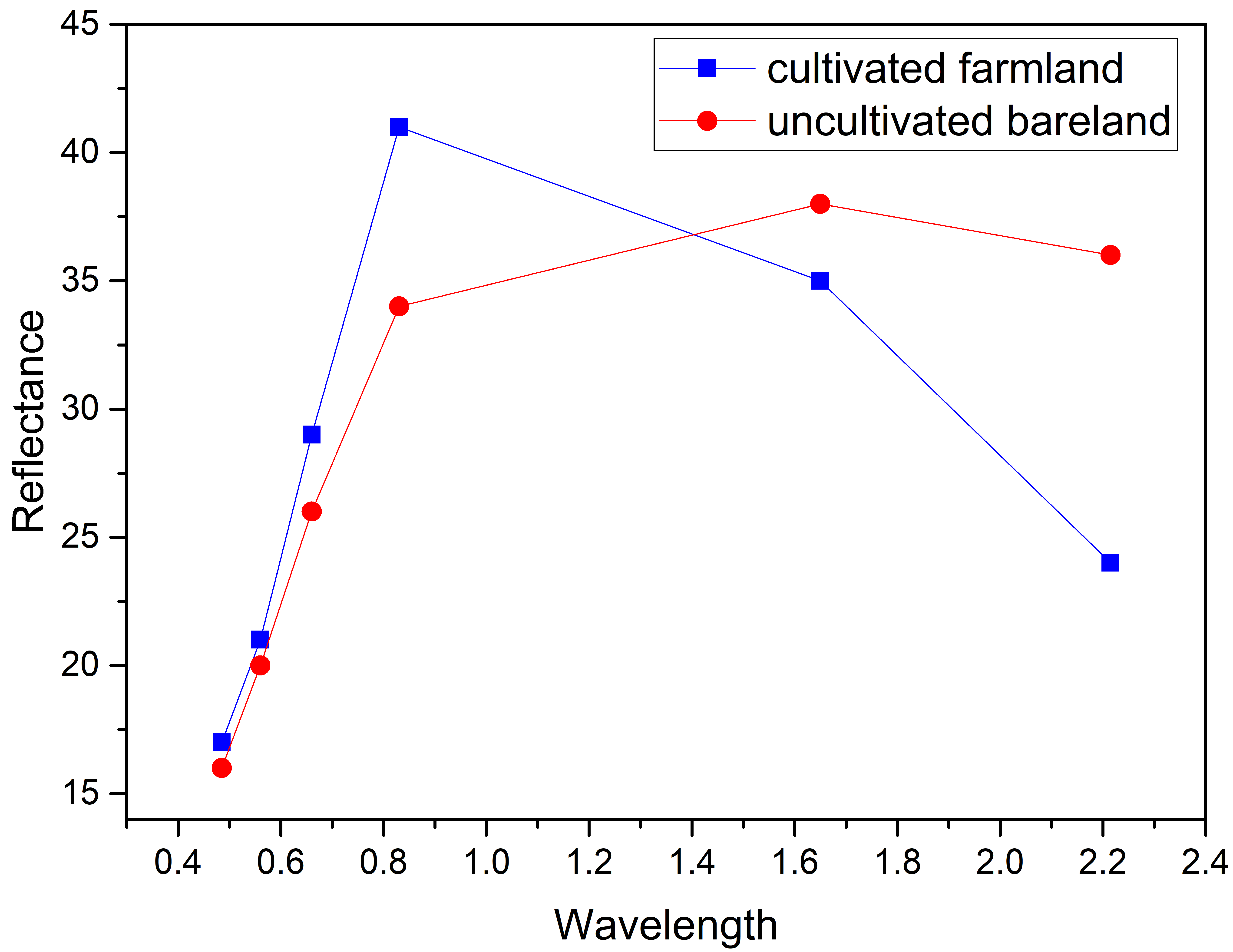}  \\
  \caption{\quad The spectral comparison for the framland and  the uncultivated bareland. }
  \label{msispec}
\end{figure}

 %in the data are suppressed, and only one of these extreme skewness directions (generally the largest one) is extracted

\begin{figure}[t]

    \centering
  \includegraphics[width=0.3\textwidth]{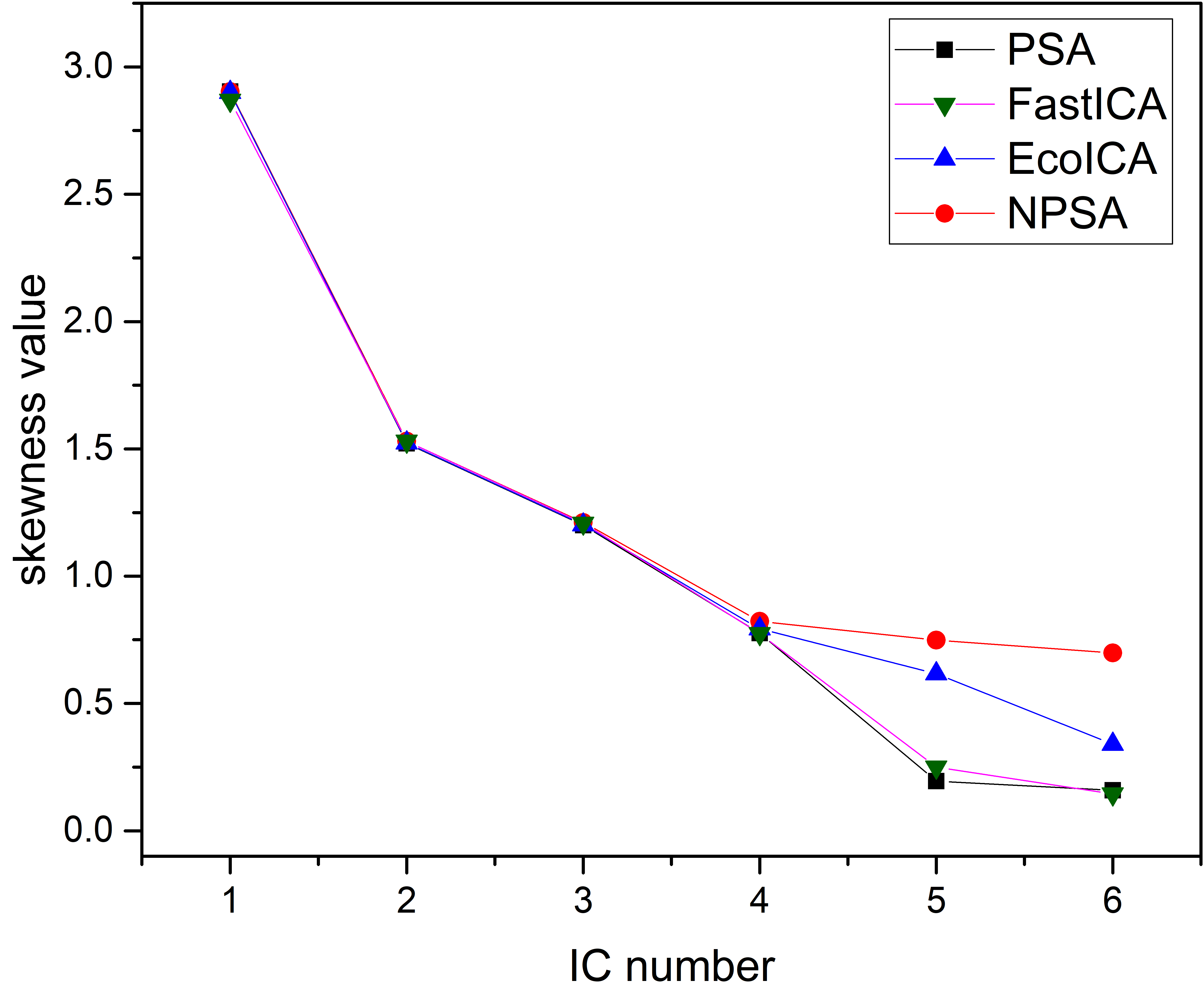}  \\
  \caption{\quad Skewness  comparison of FastICA, EcoICA, PSA and NPSA  for the multispectral image.}
  %(the values are sorted in descending order). }
  \label{msivalue}
\end{figure}

\begin{figure}[t]
    \centering
  \includegraphics[width=0.23\textwidth]{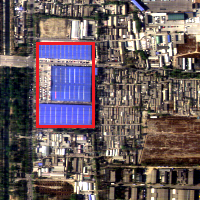}  \\
  \caption{ \quad   True color image of the hyperspectral data.
        {(R:$620$nm, G:$ 559$nm, B:$473$nm) }. }
   \label{hsi}
\end{figure}

\begin{figure}[t]
\begin{subfigure}[htbp!]{0.15\textwidth}
  \centering
  \includegraphics[width=1\textwidth]{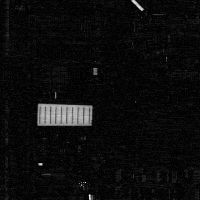}
    %\subcaptionsetup{labelformat=newnotation,font=small,justification=centering}
     \subcaption{IC1 of FastICA  \\ ($\rm skew =4.7826 $) }
     \label{fastica1}
\end{subfigure}
\hfill
\begin{subfigure}[htbp!]{0.15\textwidth}
  \centering
  \includegraphics[width=1\textwidth]{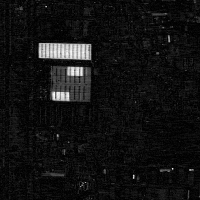}
    %\subcaptionsetup{labelformat=empty,font=small,justification=centering}
     \subcaption{IC2 of FastICA  \\ ($\rm skew =4.2132 $) }
     \label{fastica2}
\end{subfigure}
\hfill
\begin{subfigure}[htbp!]{0.15\textwidth}
  \centering
  \includegraphics[width=1\textwidth]{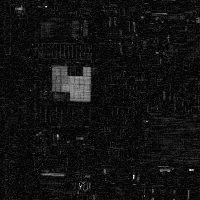}
  %\subcaptionsetup{labelformat=empty,font=small,justification=centering}
  \subcaption{IC3 of FastICA  \\ ($\rm skew =1.7656 $) }
  \label{fastica3}
\end{subfigure}

\begin{subfigure}[htbp!]{0.15\textwidth}
  \centering
  \includegraphics[width=1\textwidth]{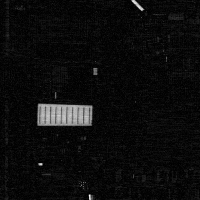}
  %\subcaptionsetup{labelformat=empty,font=small,justification=centering}
  \subcaption{IC1 of EcoICA  \\ ($\rm skew =4.8943 $) }
  \label{eco1}
\end{subfigure}
\hfill
\begin{subfigure}[htbp!]{0.15\textwidth}
  \centering
  \includegraphics[width=1\textwidth]{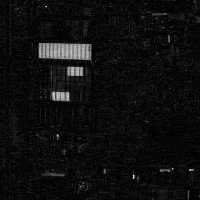}
  %\subcaptionsetup{labelformat=empty,font=small,justification=centering}
  \subcaption{IC2 of EcoICA  \\($\rm skew =4.1186 $) }
  \label{eco2}
\end{subfigure}
\hfill
\begin{subfigure}[htbp!]{0.15\textwidth}
  \centering
  \includegraphics[width=1\textwidth]{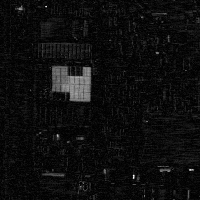}
  %\subcaptionsetup{labelformat=empty,font=small,justification=centering}
  \subcaption{IC3 of EcoICA  \\ ($\rm skew =2.8699 $)}
  \label{eco3}
\end{subfigure}

\begin{subfigure}[htbp!]{0.15\textwidth}
  \centering
  \includegraphics[width=1\textwidth]{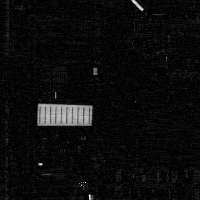}
  %\subcaptionsetup{labelformat=empty,font=small,justification=centering}
  \subcaption{IC1 of PSA  \\ ($\rm skew =4.7648 $) }
  \label{psa1}
\end{subfigure}
\hfill
\begin{subfigure}[htbp!]{0.15\textwidth}
  \centering
  \includegraphics[width=1\textwidth]{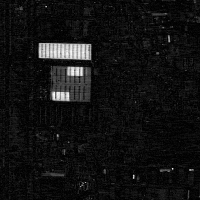}
 % \subcaptionsetup{labelformat=empty,font=small,justification=centering}
  \subcaption{IC2 of PSA \\ ($\rm skew =4.2108 $) }
  \label{psa2}
\end{subfigure}
\hfill
\begin{subfigure}[htbp!]{0.15\textwidth}
  \centering
  \includegraphics[width=1\textwidth]{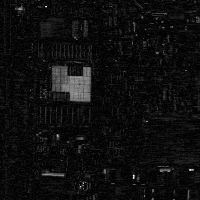}
  %\subcaptionsetup{labelformat=empty,font=small,justification=centering}
  \subcaption{IC3 of PSA  \\ ($\rm skew =2.1952 $) }
  \label{psa3}
  %[width=\textwidth]
\end{subfigure}

\begin{subfigure}[htbp!]{0.15\textwidth}
  \centering
  \includegraphics[width=1\textwidth]{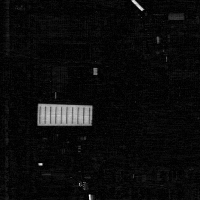}
  %\subcaptionsetup{labelformat=empty,font=small,justification=centering}
  \subcaption{IC1 of NPSA  \\ ($\rm skew =4.9410 $) }
  \label{npsa1}
\end{subfigure}
\hfill
\begin{subfigure}[htbp!]{0.15\textwidth}
  \centering
  \includegraphics[width=1\textwidth]{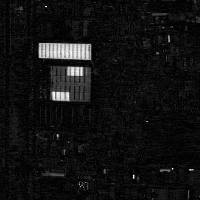}
  %\subcaptionsetup{labelformat=empty,font=small,justification=centering}
  \subcaption{IC2 of NPSA  \\ ($\rm skew =4.3176 $) }
  \label{npsa2}
\end{subfigure}
\hfill
\begin{subfigure}[htbp!]{0.15\textwidth}
  \centering
  \includegraphics[width=1\textwidth]{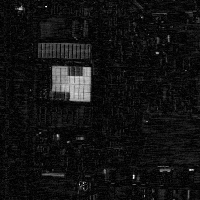}
  %\subcaptionsetup{labelformat=empty,font=small,justification=centering}
  \subcaption{IC3 of NPSA  \\ ($\rm skew =2.9675 $) }
  \label{npsa3}
\end{subfigure}
\caption{\quad Some results  of FastICA, EcoICA, PSA and NPSA for the hyperspectral image. The  corresponding skewness value of each IC is also attached.}
%(a)(b)(c) shows three of independent components of
\label{hsiresult}
\end{figure}

%In our experiment, we set the number of independent components to $p=6$, i.e., we select the full-band image to conduct the test. Eventually, we can obtain $6$ principal skewness components of the image extracted by the corresponding method and the results are shown in Fig.~\ref{msiresult}. The IC$1-3$ are almost the  same, which also correspond to the main land-cover , i,e,. vegetation, bareland and water, respectively. Meanwhile, based on the spectral interpretation, the dark-orange land-cover framed in the red rectangular is , which can be extracted as the independent component with a higher contrast. Intuitively, the result of Fig.~\ref{npsa_msi5} and Fig.~\ref{npsa_msi6} is  also superior to that of the other algorithms.
%and the remaining four ICs . %Fig.~\ref{psa_msi5}, Fig.~\ref{eco_msi5} and Fig.~\ref{psa_msi6}, Fig.~\ref{eco_msi6}.

\subsection{Experiments On Hyperspectral Data}

Here, the hyperspectral image  data we used to test the method is  from  the Operational Modular Imaging Spectrometer-II, which were acquired by the Aerial Photogrammetry
and Remote Sensing Bureau in Beijing, China, in $2000$.
It includes $64$ bands from visible to thermal infrared with about $3$-m spatial
resolution and $10$-nm spectral resolution and has  $200 \times 200$ pixels in each band.  Since the signal-to-noise ratio is low in bands $55$-$60$, we select  the remaining $58$ bands as our test data. The true color image is shown in Fig.~\ref{hsi}. The red rectangular framed in Fig.~\ref{hsi} is an area of blue-painted roof. After a field investigation, it was found that the roof was made from three different materials although there is no obvious difference in the visible band.

 We display the results of feature extraction in Fig.~\ref{hsiresult} and plot the curves of skewness   in Fig.~\ref{hsivalue}. All  algorithms can  automatically extract the three different materials as the  independent components. %Meanwhile, the results of Fig.~\ref{npsa2} and Fig.~\ref{eco2} is intuitively superior to Fig.~\ref{psa2}.
The skewness  comparison both attached in Fig.~\ref{hsiresult} for the three ICs and in Fig.~\ref{hsivalue} for all the ICs can  also   demonstrate that NPSA outperforms the other  three algorithms. It can be observed that in Fig.~\ref{hsivalue},  the skewness of IC$2$ and IC$6$  extracted by  EcoICA is slightly larger than that of NPSA. However, we can still conclude that NPSA has the better overall performance. %    As stated before,  the eigenvectors of the tensor are not necessarily orthogonal. The phenomenon that the skewness value in some  ICs of NPSA and PSA  is inequal (in Fig.~\ref{msivalue} and Fig.~\ref{hsivalue})  can confirm this theoretical analysis and imply that NPSA is more reasonable and accurate.

%\begin{figure}[t]
%    \centering
%  \includegraphics[width=0.3\textwidth]{pic/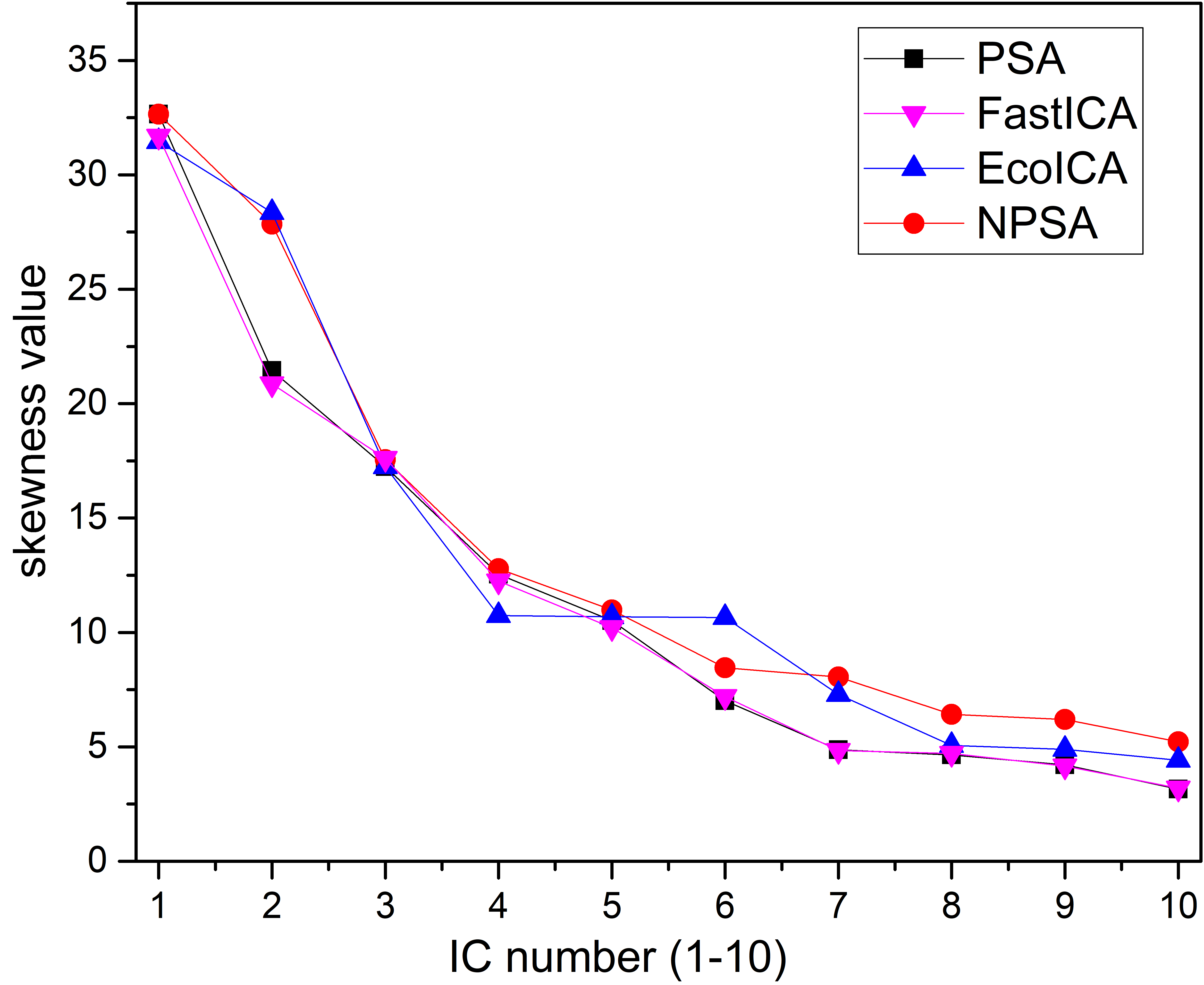}  \\
%  \caption{\quad Skewness value comparison of FastICA, EcoICA, PSA and NPSA in the respective IC of the hyperspectral image (the values are sorted in descending order). }
%  \label{hsivalue}
%\end{figure}

\begin{figure}[t]
\begin{subfigure}[htbp!]{0.23\textwidth}
  \centering
  \includegraphics[width=1\textwidth]{sv110.png}
    %\subcaptionsetup{labelformat=newnotation,font=small,justification=centering}
    % \subcaption{IC1 of FastICA  \\ ($\rm skew =4.7826 $) }
     \label{sv1}
\end{subfigure}
%\hfill
\begin{subfigure}[htbp!]{0.23\textwidth}
  \centering
  \includegraphics[width=1\textwidth]{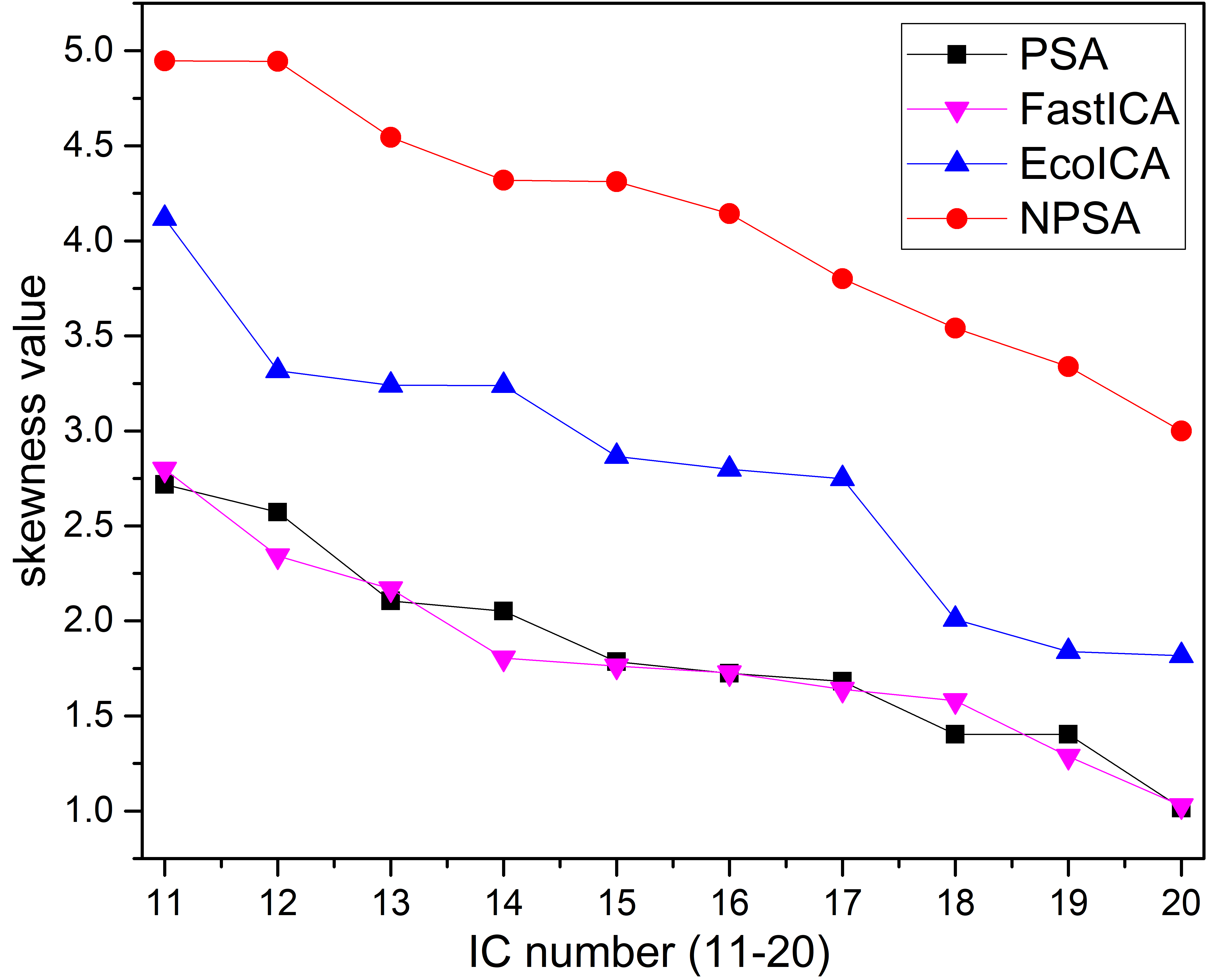}
    %\subcaptionsetup{labelformat=empty,font=small,justification=centering}
    % \subcaption{IC2 of FastICA  \\ ($\rm skew =4.2132 $) }
     \label{sv2}
\end{subfigure}
\hfill

\begin{subfigure}[htbp!]{0.23\textwidth}
  \centering
  \includegraphics[width=1\textwidth]{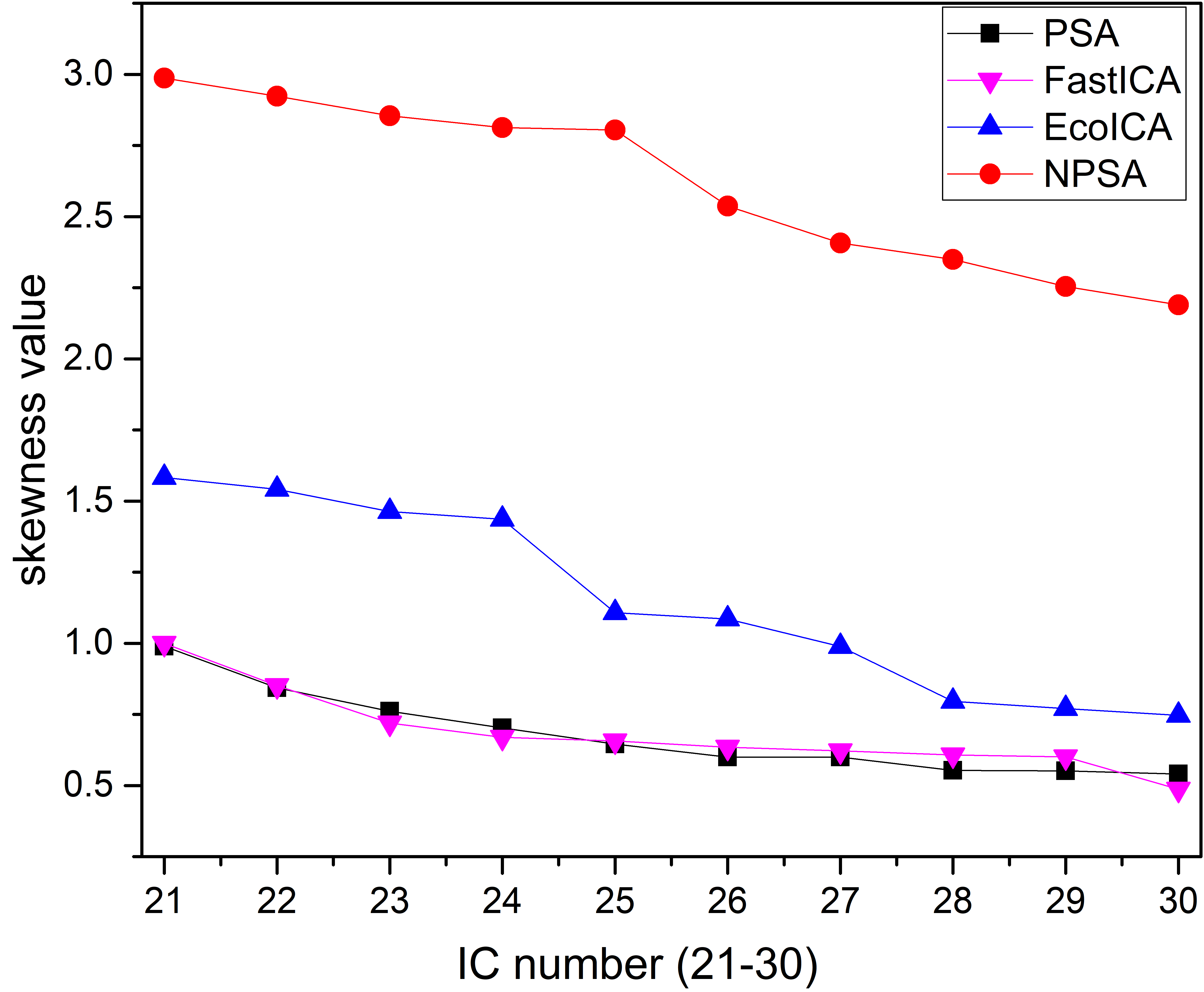}
  %\subcaptionsetup{labelformat=empty,font=small,justification=centering}
 % \subcaption{IC3 of FastICA  \\ ($\rm skew =1.7656 $) }
  \label{sv3}
\end{subfigure}
%\hfill
\begin{subfigure}[htbp!]{0.23\textwidth}
  \centering
  \includegraphics[width=1\textwidth]{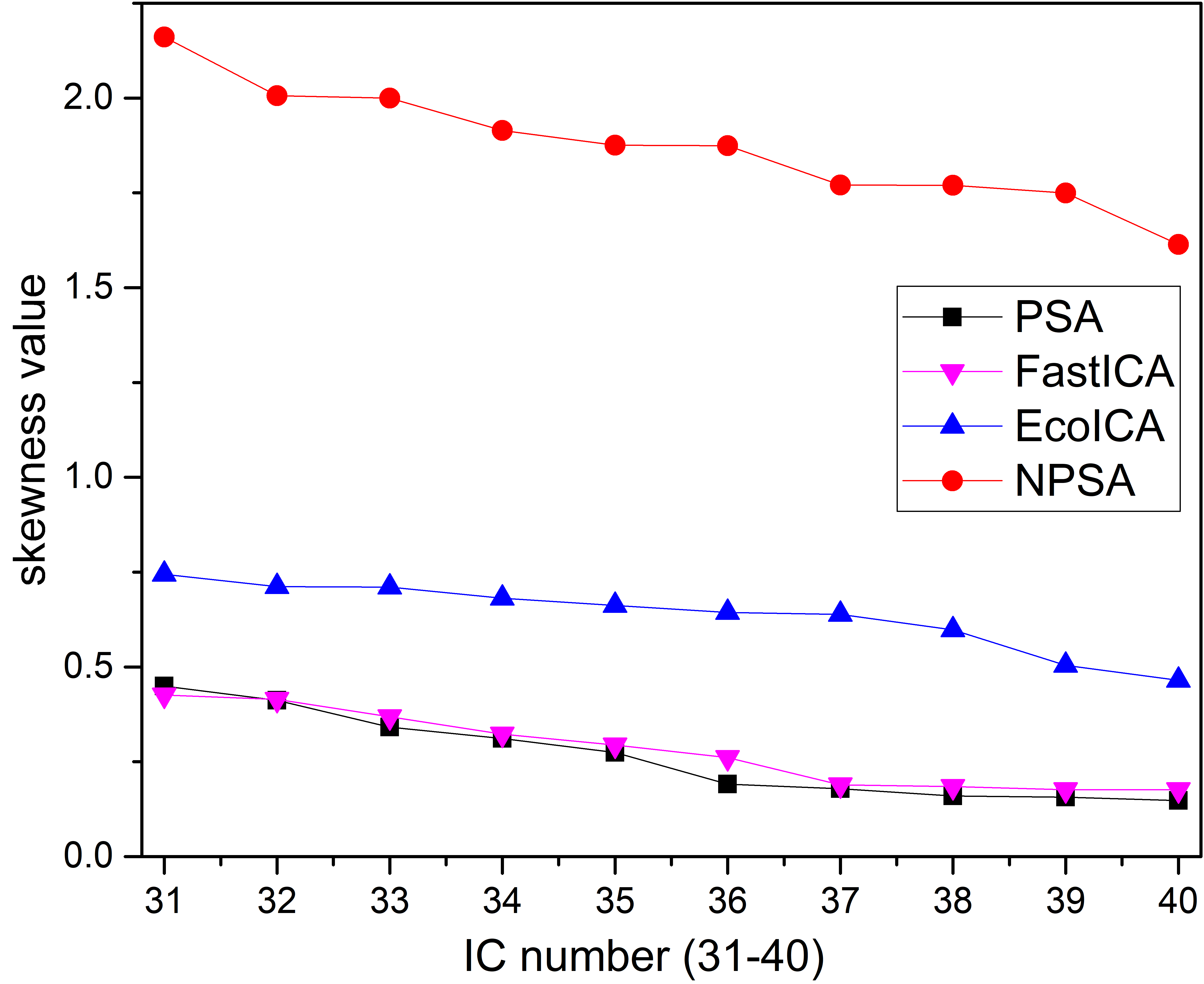}
  %\subcaptionsetup{labelformat=empty,font=small,justification=centering}
  %\subcaption{IC1 of EcoICA  \\ ($\rm skew =4.8943 $) }
  \label{sv4}
\end{subfigure}
\hfill

\begin{subfigure}[htbp!]{0.23\textwidth}
  \centering
  \includegraphics[width=1\textwidth]{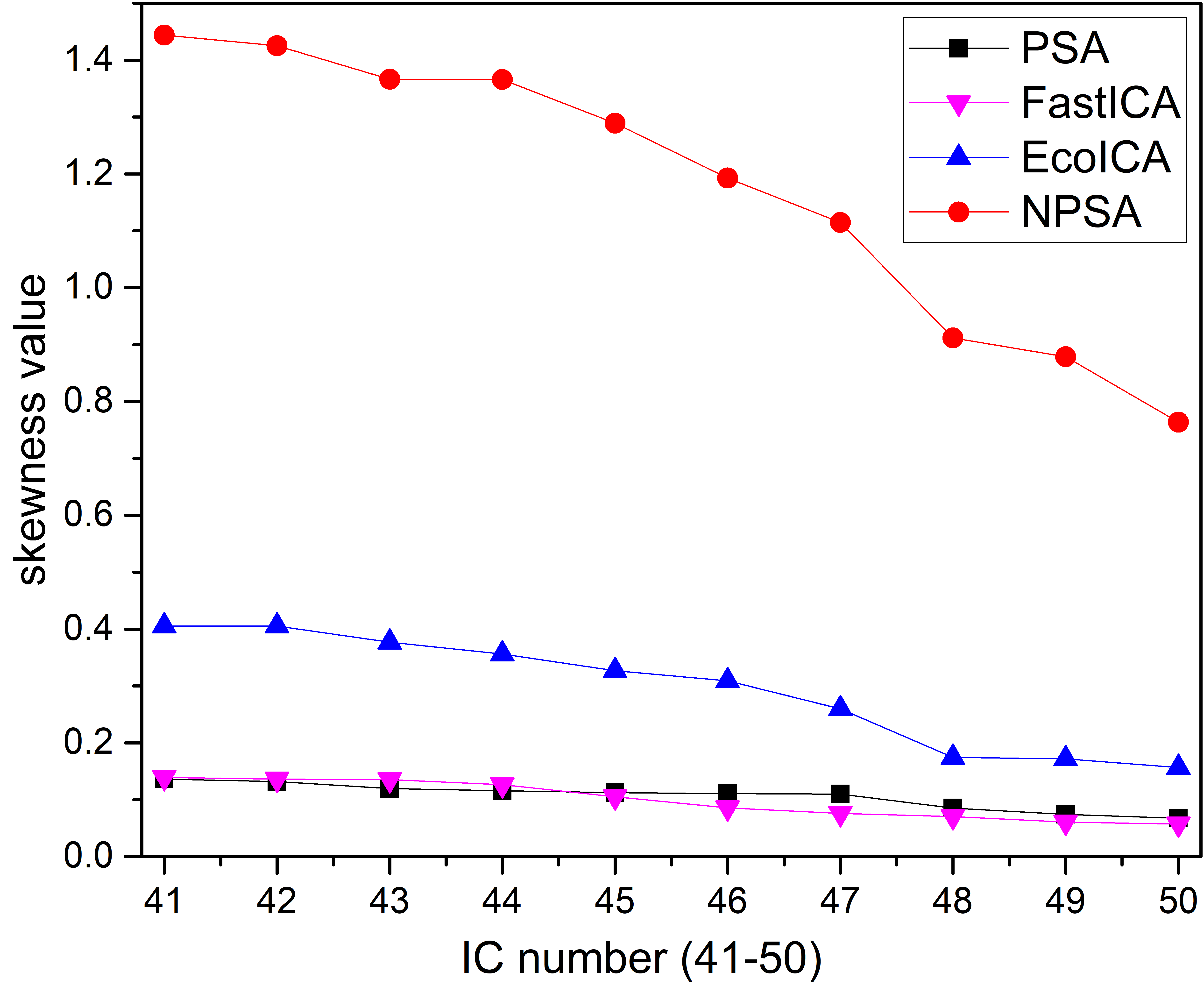}
  %\subcaptionsetup{labelformat=empty,font=small,justification=centering}
  %\subcaption{IC2 of EcoICA  \\($\rm skew =4.1186 $) }
  \label{sv5}
\end{subfigure}
%\hfill
\begin{subfigure}[htbp!]{0.23\textwidth}
  \centering
  \includegraphics[width=1\textwidth]{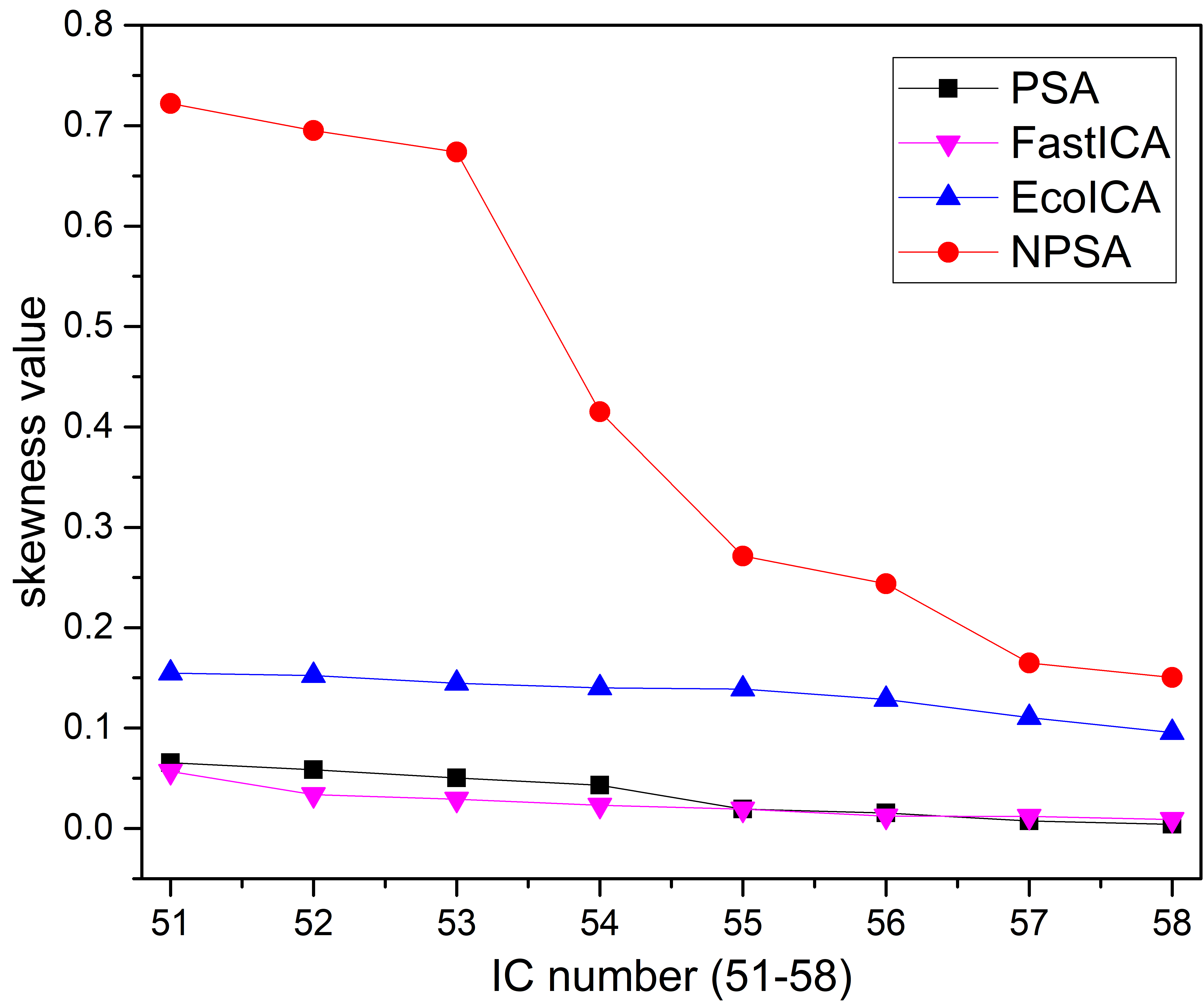}
  %\subcaptionsetup{labelformat=empty,font=small,justification=centering}
  %\subcaption{IC3 of EcoICA  \\ ($\rm skew =2.8699 $)}
  \label{sv6}
\end{subfigure}
  \caption{\quad Skewness  comparison of FastICA, EcoICA, PSA and NPSA for the hyperspectral image. }
  \label{hsivalue}
\end{figure}

\begin{figure}[t]
    \centering
  \includegraphics[width=0.3\textwidth]{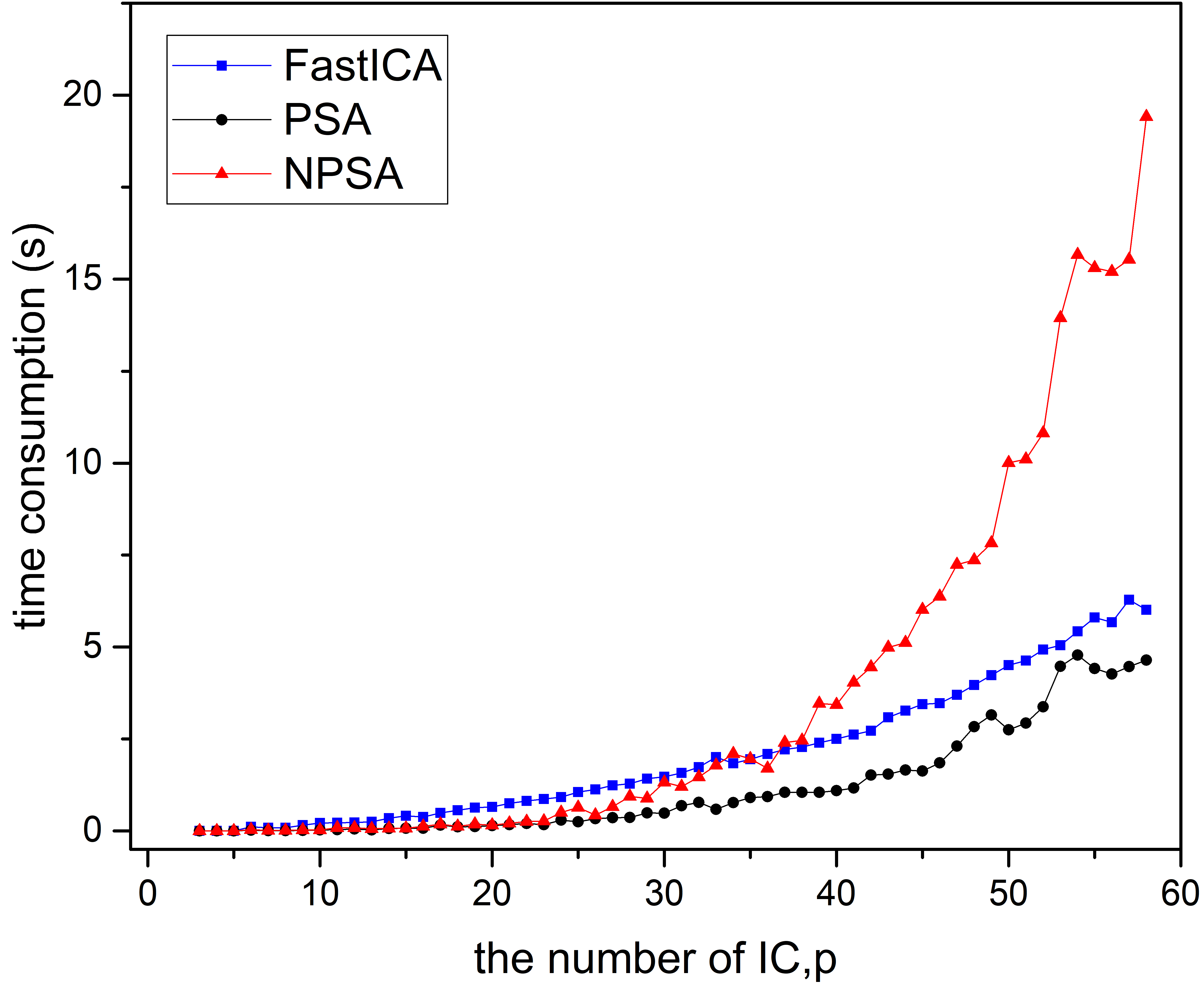}  \\
  \caption{\quad Time consumption comparison of FastICA, PSA and NPSA as a function of the number of ICs. }
  %, i.e., $p $, ranging from $6 $ to $ 58$
  \label{hsitime}
\end{figure}
Besides, a time efficiency comparison is also conducted  in this experiment. The number of ICs, i.e, $p$, ranges from  $6 $ to $ 58$ and the time curve as a function of $p$ is plotted in Fig.~\ref{hsitime}. As $p$ increases, the time consumption of NPSA is greater than that of PSA. This is because we need to update the tensor in each unit of NPSA while this repeated computation in PSA can be simplified \cite{PSA}. However, the time of NPSA for a full-band data set, i.e., $p=58$, is about $ 19 $ seconds, which is still efficient and acceptable for the real hyperspectral image. Note that since the time of EcoICA for a full-band data set is about $ 60 $ seconds, which is much larger than that of the other three algorithms, so we do not take it into our comparison in this experiment.  To sum up, our algorithm can obtain a higher accuracy in  the sacrifice of  some time efficiency at an acceptable level.

%\begin{figure}[t]
%    \centering
%  \includegraphics[width=0.4\textwidth]{pic/cuprite_sv.png}  \\
%  \caption{\quad Skewness value comparison of NPSA, PSA and EcoICA in the respective IC (the values are sorted in descending order).  }
%  \label{hsivalue}
%\end{figure}

\section{Conclusion}
Orthogonal complement constraint is a widely used  strategy to prevent the solution to be determined  from converging to the previous ones\cite{PSA,MPSA,fastICA}. However, such a  constraint can be  irreconcilably contradicted with the inherent nonorthogonality of supersymmetric tensor.
In this paper, originated from PSA,  we have proposed a new algorithm, which is named  nonorthogonal principal skewness analysis (NPSA). In NPSA, a more relaxed constraint than the orthogonal constraint is proposed  to search for the locally  maximum skewness direction in a larger space, and thus we can obtain the more accurate results. A detailed theoretical analysis is also presented to justify its validity. Furthermore, it is interesting to  find that the differences of PSA and NPSA lie in the order in  which they perform the orthogonal complement projection and the $n$-mode operation. We first apply the algorithm into the BIS probelm   and  several accuracy metrics evaluated in our  experiments show that NPSA can obtain a more accurate and robust result. Experiments for real multi/hyperspectral data also demonstrate that NPSA  outperforms the other algorithms in extracting the ICs of the image.
%, where we utilize the Kronecker product of the previous eigenvector to enlarge the search space of the  one to be determined
%In this paper, we have proposed an improved version for PSA, i.e., NPSA, to search for the locally  maximum skewness direction more accurately. In NPSA, we utilize the Kronecker product of the previous eigenvector to enlarge the search space of the  one to be determined. A detailed theoretical analysis is also presented to prove its vaildity. By analyzing and reducing the  complexity, it can be found that the differences of PSA and NPSA lie in the order in  which they perform the orthogonal complement projection and the $n$-mode operation. We first apply the algorithm into the BIS   and  several accuracy metrics evaluated in our  experiments show that NPSA can obtain a more accurate and stabler performance. Experiments for real multi/hyperspectral data also demonstrate that NPSA  outperforms the other algorithms in extracting the independent components of the image.

On the one hand, our method can be extended to fourth-or-higher-order case naturally. Both PSA and NPSA focus on the skewness index, which may not be always the best choice to describe the statistical structure of the data. Kurtosis and other indices  can be considered as the alternative. On the other hand, NPSA needs to update the coskewness tensor in each unit, which makes it slightly more time-consuming than PSA. So in the future, more efficient optimization methods will be worth studying.

%Joint diagonalization of a set of matrices is an essential tool in the application of  blind source separation. In this letter, we propose a new joint diagonalization algorithm named GNJD with respect to a general (not necessarily orthogonal) matrix. We compare it with other proposed algorithms, including EcoICA, SSF and DNJD. Experimentation on the  BIS problem shows that the new method obtains a better separation performance in terms of some indices. Meanwhile, the simulation experiment on time consumption comparison demonstrates the time-saving advantage of our method.

%\section{Acknowledgements}
%We are very grateful to Dr. Suhong Liu from Beijing Normal University, China for her great
%help in language.

%\section{Acknowledgment}
%The authors would like to thank the associate editor and
%the referees for their time and efforts to review this paper, and
%for very constructive comments which helped to improve the
%quality and presentation of the paper.

\bibliographystyle{unsrt}
\bibliography{NPSA}

\end{document}